\title{Class-Distribution-Aware Pseudo-Labeling for Semi-Supervised Multi-Label Learning}
\author{%
  Ming-Kun Xie \\
  Nanjing University of \\
  Aeronautics and Astronautics\\
  \texttt{mkxie@nuaa.edu.cn} \\
  \And
  Jia-Hao Xiao \\
  Nanjing University of \\
  Aeronautics and Astronautics\\
  \texttt{jiahaoxiao@nuaa.edu.cn} \\
  \And
  Hao-Zhe Liu \\
  Nanjing University of \\
  Aeronautics and Astronautics\\
  \texttt{haozheliu@nuaa.edu.cn} \\
  \And
  Gang Niu \\
  RIKEN Center for AIP\\
  \texttt{gang.niu.ml@gmail.com} \\
  \And
  Masashi Sugiyama \\
  RIKEN Center for AIP\\
  University of Tokyo\\
  \texttt{sugi@k.u-tokyo.ac.jp} \\
  \And
  Sheng-Jun Huang \\
  Nanjing University of \\
  Aeronautics and Astronautics\\
  \texttt{huangsj@nuaa.edu.cn} \\

}
\newtheorem{thm}{Theorem}
\newtheorem{lemma}{Lemma}
\begin{document}

\def \x {\bm{x}}
\def \y {\bm{y}}
\def \z {\bm{z}}
\def \l {\bm{l}}
\def \f {\bm{f}}
\def \s {\mathbf{s}}
\maketitle

\begin{abstract}

Pseudo-labeling has emerged as a popular and effective approach for utilizing unlabeled data. However, in the context of semi-supervised multi-label learning (SSMLL), conventional pseudo-labeling methods encounter difficulties when dealing with instances associated with multiple labels and an unknown label count. These limitations often result in the introduction of false positive labels or the neglect of true positive ones. To overcome these challenges, this paper proposes a novel solution called Class-Aware Pseudo-Labeling (CAP) that performs pseudo-labeling in a class-aware manner. The proposed approach introduces a regularized learning framework incorporating class-aware thresholds, which effectively control the assignment of positive and negative pseudo-labels for each class. Notably, even with a small proportion of labeled examples, our observations demonstrate that the estimated class distribution serves as a reliable approximation. Motivated by this finding, we develop a class-distribution-aware thresholding strategy to ensure the alignment of pseudo-label distribution with the true distribution. The correctness of the estimated class distribution is theoretically verified, and a generalization error bound is provided for our proposed method. Extensive experiments on multiple benchmark datasets confirm the efficacy of CAP in addressing the challenges of SSMLL problems.



\end{abstract}

\section{Introduction}

In \textit{single-label} supervised learning, each instance is assumed to be associated with only one class label, while many realistic scenarios may be \textit{multi-labeled}, where each instance consists of multiple semantics. For example, an image of a nature landscape often contains the objects of \textit{sky}, \textit{cloud}, and \textit{mountain}. Multi-label learning (MLL) is a practical and effective paradigm for handling examples with multiple labels. It trains a classifier that can predict all the relevant labels for unseen instances based on the given training examples. A large number of recent works have witnessed the great progress that MLL has made in many practical applications \cite{liu2021emerging}, \textit{e.g.}, image annotation \cite{kuznetsova2020open}, protein subcellular localization \cite{Liu_2022_CVPR}, and visual attribute recognition \cite{pham2021learning}.

Thanks to its powerful capacity, the deep neural network (DNN) has become a prevalent learning model for handling MLL examples \cite{ridnik2021asymmetric}. Unfortunately, it requires a large number of precisely labeled examples to achieve favorable performance. This leads to a high cost of manual annotation, especially when the dataset is large and the labeling task must be carried out by an expert. Given that it is hard to train an effective DNN based on a small subset of training examples, it is rather important to exploit the information from unlabeled instances. The problem has been formalized as a learning framework called semi-supervised multi-label learning (SSMLL), which aims to train a classifier based on a small set of labeled MLL examples and a large set of unlabeled ones. 

Compared to semi-supervised learning (SSL) that has made great progress \cite{berthelot2019mixmatch,sohn2020fixmatch}, SSMLL has received relatively less attention in the context of deep learning. 
Generally, there are still three main challenges towards the development of SSMLL. Firstly, since each instance is associated with multiple labels and the number is unknown, the commonly used pseudo-labeling strategy that selects the most probable label or the top-$k$ probable labels cannot be applied to the SSMLL problems. It would face the dilemma of either introducing false positive labels or neglecting true positive ones. Secondly, due to the intrinsic class-imbalance property of MLL data, it is hard to achieve favorable performance by using a fixed threshold for each instance. Thirdly, recent studies have mainly focused on multi-label learning with missing labels (MLML) \cite{durand2019learning,huynh2020interactive,ben2022multi} scenarios, where each training instance is assumed to be assigned with a subset of true labels. Unfortunately, these methods often fail to achieve favorable performance, or cannot even be applied to the SSMLL scenarios, since most of them were designed under the assumption of MLML.

To solve these challenges, in this paper, we propose a novel Class-Aware Pseudo-labeling (CAP) method for handling the SSMLL problems. Unlike the existing methods, we perform pseudo-labeling in a class-aware manner to avoid estimating the number of true labels for each instance, which can be very hard in practice. Specifically, a regularized learning framework is proposed to determine the numbers of positive and negative pseudo-labels for each class based on the class-aware thresholds. Given that the true class distribution is unknown, we alternatively determine the thresholds based on the estimated class distribution of labeled data, which can be a tight approximation according to our observation. 
Our theoretical results show the correctness of estimated class distribution and provide a generalization error bound for CAP. Extensive experimental results on multiple benchmark datasets with a variety of comparing methods validate that the proposed method can achieve state-of-the-art performance.

\section{Related Work}


Thanks to the powerful learning capacity of DNNs, MLL has made great advances in the context of deep learning. Some methods designed architectures \cite{chen2019multi} or training strategies \cite{lanchantin2021general} to exploit the label correlations. Some other methods designed sophisticated loss functions to improve the performance of MLL \cite{ridnik2021asymmetric}. The last group of methods designed specific architectures to capture the objects related to semantic labels. Global-average-pooling (GAP) based models \cite{ProkofievSovrasovCombiningML2022} and attention-based models \cite{lanchantin2021general,liu2021query2label} are two groups of representative methods. 


%

There are relatively few works that study how to improve the performance of deep models in SSMLL scenarios.
Instead of end-to-end training, the only deep SSMLL method \cite{wang2020dual} performed the two-stage training, which first used a DNN to extract features, and then used a linear model to perform classification. 
\cite{shi2020semi} proposed a deep sequential generative model to handle the noisy labels collected by crowdsourcing and unlabeled data simultaneously. \cite{kong2011transductive} focused on the transductive and non-deep scenario, and thus cannot be applied to our setting. The method proposed by \cite{song2021semi} utilized the graph neural network (GNN) to deal with SSMLL data with graph structures. In contrast, there are many works that trained linear models to solve the SSMLL problems \cite{chen2008semi,guo2012semi,wang2013dynamic, zhao2015semi,zhan2017inductive,tan2017semi}.



Pseudo-labeling has become a popular method in semi-supervised learning (SSL). The idea was firstly applied to semi-supervised training of deep neural networks \cite{lee2013pseudo}. Subsequently, a great deal of works have been devoted to improving the quality of pseudo-labels either by adopting consistency regularization \cite{berthelot2019mixmatch,sohn2020fixmatch}, or by using distribution alignment \cite{mann2007simple, berthelot2019remixmatch}. The contrastive learning technique has been applied to improve the performance of SSL \cite{li2021comatch}. Recent studies have also paid attention to dealing with the class-imbalance problem of pseudo-labeling in SSL scenarios \cite{kim2020distribution,wei2021crest,guo2022class}. Several works have been explored the idea of selecting different thresholds for different classes to improve the performance of SLL \cite{feofanov2019transductive,guo2022class,wang2023freematch}.However, these methods are designed for the multi-class single-label scenario, and cannot be directly applied to the multi-label scenario. 




In order to reduce the annotation cost, a cost-effective strategy is to assign a subset of true labels to each instance. For example, \cite{durand2019learning} designed a partial binary cross entropy (BCE) loss that re-weights the losses of known labels. As an extreme case of MLML, single positive multi-label learning (SPML) \cite{cole2021multi,zhou2022acknowledging,verelst2023spatial}  assumes that only one of multiple true labels can be observed during the training stage. The pioneering work \cite{cole2021multi} trains DNNs by simply treating unobserved labels as negative ones and utilizes the regularization to alleviate the harmfulness of false negative labels. \cite{zhou2022acknowledging} propose asymmetric pseudo labeling technique to recover true labels.

\section{The Method}

In the SSMLL problem, let $\x\in\mathcal{X}$ be a feature vector and $\y\in\mathcal{Y}$ be its corresponding label vector, where $\mathcal{X}=\mathbb{R}^d$ is the feature space and $\mathcal{Y}=\{0,1\}^q$ is the label space with $q$ possible class labels. Here, $y_k=1$ indicates the $k$-th label is relevant to the instance, while $y_k=0$, otherwise. Suppose that we are given a labeled dataset with $n$ training examples $\mathcal{D}_l=\{(\x_i,\y_i)\}_{i=1}^n$ and an unlabeled dataset with $m$ training instances $\mathcal{D}_u=\{\x_j\}_{j=1}^m$. Our goal is to train a DNN $f(\x;\theta)$ based on the labeled dataset $\mathcal{D}_l$ and unlabeled dataset $\mathcal{D}_u$, where $\theta$ is the parameter of the network. For notational simplicity, we omit the notation $\theta$ and let $f(\x)$ be the predicted probability distribution over classes and $f_k(\x)$ be the predicted probability of the $k$-th class for input $\x$. 


Typical multi-label learning methods usually train a DNN with the commonly used binary cross entropy (BCE) loss, which decomposes the original task into multiple binary classification problems. Unfortunately, BCE loss often suffers from positive-negative imbalance issue. To mitigate this problem, we adopt the asymmetric loss (ASL) \cite{ridnik2021asymmetric}, which is a variant of focal loss with different focusing parameters for positive and negative instances. In our experiment, we found it works better than BCE loss.  Formally, given the predicted probabilities $f(\x)$ on instance $\x$, the ASL loss is defined as
\begin{equation}\label{eq:asl}
\mathcal{L}(f(\x),\y)= \sum\nolimits_{k=1}^q y_k\ell_1(f_k(\x))+(1-y_k)\ell_0(f_k(\x)),
\end{equation}
Here, $\ell_1(f_k)=-(1-f_k)^{\lambda_1}\log(f_k)$ and $\ell_0(f_k)=-(f_k)^{\lambda_0}\log(1-f_k)$ represent the losses calculated on positive and negative labels, where $\lambda_1$ and $\lambda_0$ are positive and negative focusing parameters.

\subsection{Instance-Aware Pseudo-Labeling}

The loss function may not be the best choice to solve the SSMLL problem, since besides the labeled training examples, there still exist a large number of unlabeled training examples. To exploit the information of unlabeled data, inspired by recent SSL works \cite{berthelot2019mixmatch,sohn2020fixmatch}, an intuitive strategy is assigning the unlabeled instances with pseudo-labels based on the model outputs. Formally, we define the unlabeled loss $\mathcal{L}_u$ as
\begin{equation*}
\mathcal{L}_u(f(\x),\hat{\y}) = \sum\nolimits_{k=1}^{q}\hat{y}_k\ell_{1}(f_k(\x))+(1-\hat{y}_k)\ell_{0}(f_k(\x)),
\end{equation*}
where $\hat{\y}=[\hat{y}_1,\cdots,\hat{y}_j]^\top$ represents the pseudo-label vector for instance $\x$. 

In the above formulation, the most significant element is how to obtain the pseudo-labels $\hat{\y}$ that significantly affects the final performance of SSMLL. Most of existing pseudo-labeling methods are performed in an instance-aware manner by assigning pseudo-labels to each unlabeled instance based on its probability distribution. Below, we briefly review three instance-aware pseudo-labeling strategies that can be applied to the SSMLL problems. The most commonly used strategy adopted by the SSL method called FixMatch \cite{sohn2020fixmatch} is to select the most probable label as the ground-truth one:
\begin{equation}\label{eq:top1}
\hat{y}_k=\left\{
\begin{aligned}
& 1  &&\text{if}\ k=\arg\max_{c\in[q]} f_{c}(\x),\\
& 0  &&\text{otherwise}.
\end{aligned}
\right.	
\end{equation}
One advantage of the strategy is that it is likely to safely identify a true label for each unlabeled training instance. Unfortunately, it is obvious that the strategy would neglect multiple true labels. Generally, it transforms the unlabeled dataset into another learning scenario called single positive multi-label learning (SPML) \cite{cole2021multi}, where only one of multiple positive labels is available for each instance. A straightforward strategy is to simply treat unobserved labels as negative ones. Although this strategy enables us to train a classifier based on SPML data, it would introduce a large number of false negative labels, leading to unfavorable performance.

The second choice is an improved version of the above strategy, which selects the top $l$ probable labels as the true ones: 
\begin{equation}\label{eq:topk}
\hat{y}_k=\left\{
\begin{aligned}
& 1  &&\text{if}\ f_k(\x)\geq\tau^l,\\
& 0  &&\text{otherwise},
\end{aligned}
\right.
\end{equation}
where $\tau^l$ is the $l$-th predicted probability in a descending order. The strategy conducts a competition among labels, and selects top $l$ winners. The optimal solution is to set $l$ as the true number of positive labels for each unlabeled instance. Unfortunately, since the true number is unknown in practice, as a compromise, we set $l$ as the average number of positive labels per instance. Given that the true number does not always equal to the average number, it would be caught in a dilemma of either introducing false positive labels or neglecting true positive ones.

The last choice is to adopt an instance-aware threshold $\tau_j$ that separates positive and negative labels for each unlabeled instance.
\begin{equation}\label{eq:iat}
\hat{y}_{jk}=\left\{
\begin{aligned}
& 1  &&\text{if}\ f_k(\x_j)\geq\tau_j,\\
& 0  &&\text{otherwise}.
\end{aligned}
\right.	
\end{equation}
Compared to the above methods, this strategy achieves a strong flexibility that allows it to assign different numbers of positive labels to different instances. A potential limitation is that it is hard to find the optimal thresholds for different instances. In practice, a feasible solution is to adopt a global threshold, that is $\forall j\in[m], \tau_j=\tau$. Obviously, it is impossible to adopt a global threshold that is optimal for all instances, especially considering the class-imbalance property of MLL data. In general, a large threshold often leads to a small recall score of tail classes, which indicates that less positive labels would be identified. While a small threshold often results in a small precision score of head classes, which indicates besides positive labels, a great deal of negative ones would be treated as positive ones. The dilemma prevents the model from obtaining favorable performance.

\subsection{Class-Aware Pseudo-Labeling}

As discussed above, in many real-world scenarios, it is really difficult to acquire the true number of positive labels for each instance. This leads the instance-aware pseudo-labeling methods to be caught in the dilemma of either mislabeling false positive labels or neglecting true positive labels, resulting in a noticeable decrease of the model performance. 

To solve this issue, we propose a regularized learning framework to assign pseudo-labels in a class-aware manner. Formally, we reformulate the optimization problem of SSMLL as
\begin{equation}\label{eq:optim}
\begin{aligned}
\min_{\hat{\y},\theta}&\sum_{i=1}^n\sum_{k=1}^q y_{ik}\ell_1(f_k(\x_i))+(1-y_{ik})\ell_0(f_k(\x_i))\\
&+\sum_{j=1}^m\sum_{k=1}^q\hat{y}_{jk}\ell_1(f_k(\x_j))+(1-\hat{y}_{jk})\ell_0(f_k(\x_j))-\sum_{j=1}^m\sum_{k=1}^q\alpha_k\hat{y}_{jk}+\beta_k(1-\hat{y}_{jk}),\\
\mathrm{s.t.}&\quad \forall j\in[m], \hat{\y}_j=[y_{j1},\cdots,y_{jq}]^\top\in\{0,1\}^q,\\
&\quad\forall k\in[q], \alpha_k>0, \beta_k>0, 
\end{aligned}
\end{equation}
where $\alpha_k$ and $\beta_k$ are class-aware regularized parameters to control how many positive and negative labels would be included into model training for class $k$. Below, we primarily provide a solution of the optimization problem Eq.\eqref{eq:optim}, and then discuss how to set parameters $\alpha_k$ and $\beta_k$ to capture the true class distribution of unlabeled examples.


\textbf{Alternative Search}\quad It is hard to directly solve the optimization problem Eq.\eqref{eq:optim}, since there are two sets of variables. A feasible solution is to adopt the alternative convex search \cite{bazaraa2013nonlinear,zou2019confidence} strategy that optimizes a group of variables by fixing the other group of variables. 

Suppose that pseudo labels $\hat{\y}$ are given, then the optimization problem Eq.\eqref{eq:optim} can be transformed into an ordinary loss by treating the pseudo labels as the true ones:
\begin{equation}\label{eq:model}
\min_{\theta}\frac{1}{n}\sum\nolimits_{i=1}^n\mathcal{L}(f(\x_i),\y_i)+\frac{1}{m}\sum\nolimits_{j=1}^m\mathcal{L}_u(f(\x_j),\hat{\y}_j),
\end{equation}
which can be solved by applying the stochastic gradient decent (SGD) method.


With the parameters $\theta$ fixed, we reformulate the optimization problem with respect to $\hat{\y}$ as
\begin{equation}\label{eq:pseudo}
\begin{aligned}
\min_{\hat{\y}}
&\sum_{j=1}^m\sum_{k=1}^q\hat{y}_{jk}\ell_1(f_k(\x_j))+(1-\hat{y}_{jk})\ell_0(f_k(\x_j))-\sum_{j=1}^m\sum_{k=1}^q\alpha_k\hat{y}_{jk}+\beta_k(1-\hat{y}_{jk}).
\end{aligned}
\end{equation}
Consider that $\hat{y}_{k}$ is assume to be one or zero, we can obtain the following solution:
\begin{equation}\label{eq:cap}
\hat{y}_k=\left\{
\begin{aligned}
& 1  &&\text{if}\ f_k(\x)\geq\tau(\alpha_k),\\
& 0  &&\text{if}\ f_k(\x)\leq\tau(\beta_k),\\
& -1 &&\text{otherwise},
\end{aligned}
\right.	
\end{equation}
where $\tau(\alpha_k)=\exp(-\alpha_k)$ and $\tau(\beta_k)=\exp(-\beta_k)$ are two class-aware thresholds, and $\hat{y}_k=-1$ means that the label $\hat{y}_k$ would not be used for model training.

\begin{figure}[!tb]
	\centering
	\subfigure[Different Pseudo-Labling Strategies]
	{
		\includegraphics[width=0.6\textwidth]{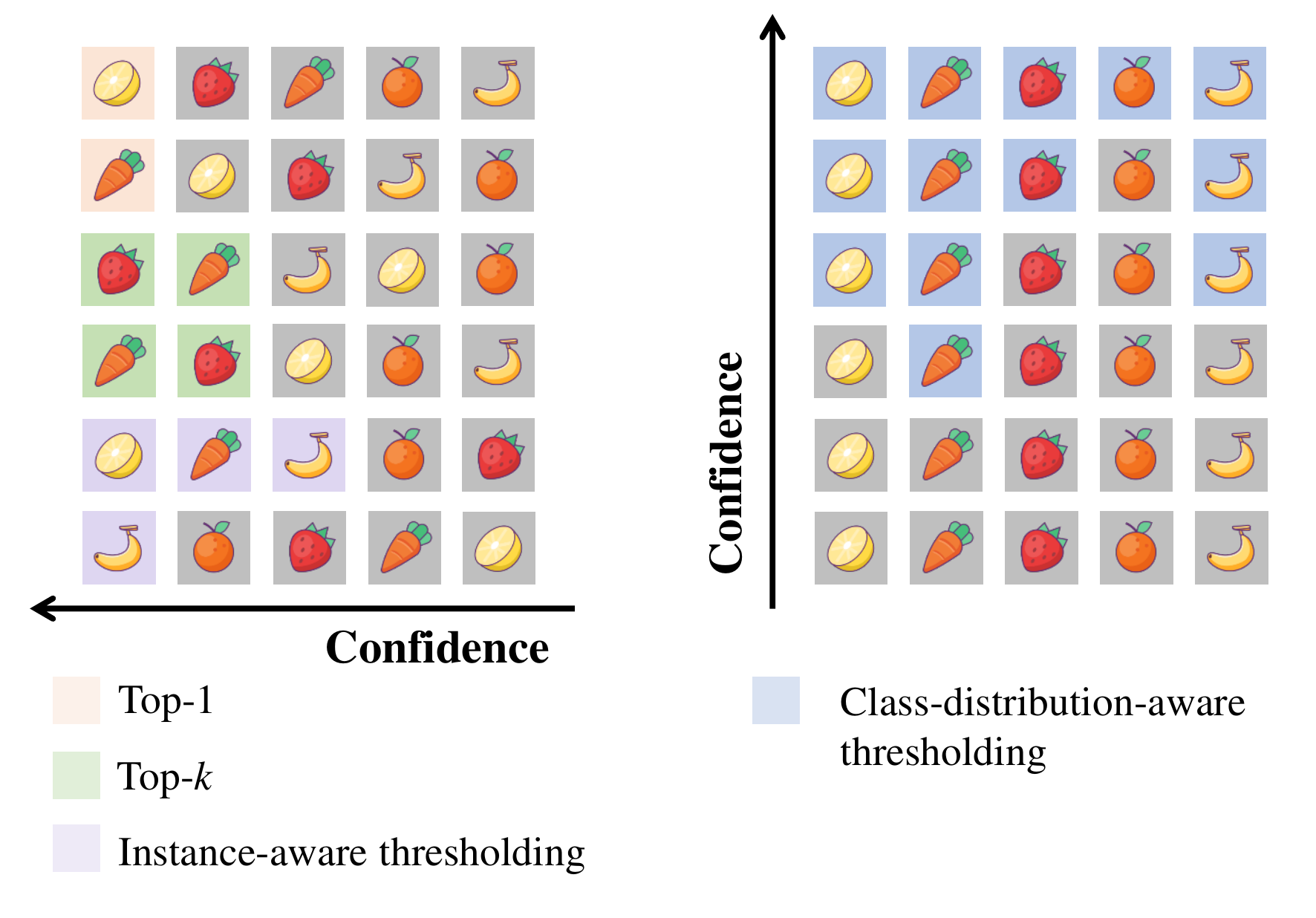}
	}
	\subfigure[Class Proportions]
	{
		\includegraphics[width=0.36\textwidth]{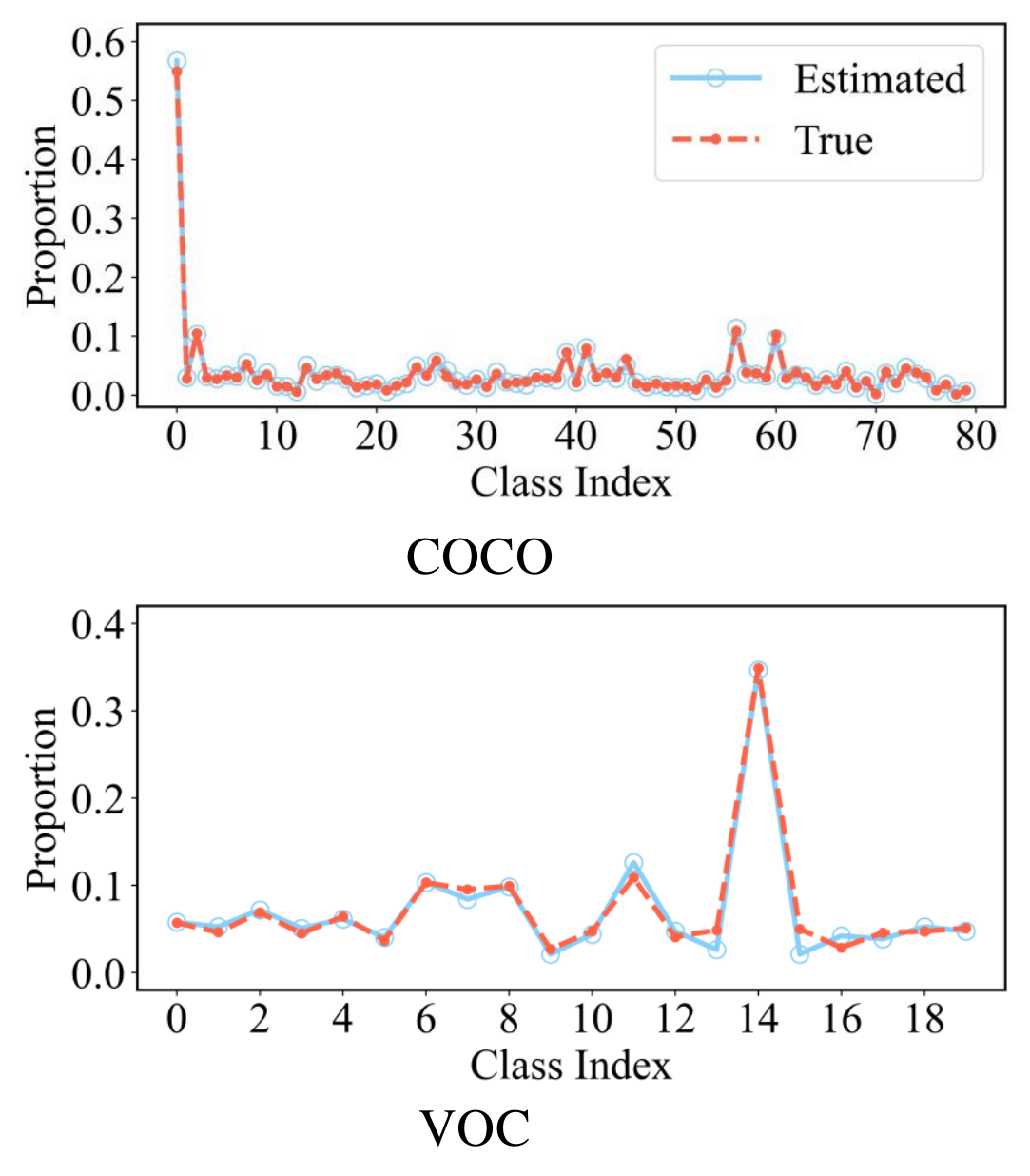}
	}
	
	\caption{(a) An illustration of the comparison between instance-aware and class-aware pseudo-labeling methods. (b) The curves of the estimated and true class proportions on COCO and VOC. By using the CAT strategy, CAP can provide high-quality pseudo-labels by approximating the true class distribution. This can be validate by the results in (b), where the empirical and true class proportions of positive labels show high-level consistency.}
	\label{fig:pesudo_labeling}
\end{figure}

\subsection{Class-Distribution-Aware Thresholding} 
An important problem is how to set the thresholds $\tau(\alpha_k)$ and $\tau(\beta_k)$, which determine the numbers of positive and negative pseudo-labels for every class $k$. In order to capture the true class distribution, we propose the Class-distribution-Aware Thresholding (CAT) strategy to determine $\tau(\alpha_k)$ and $\tau(\beta_k)$. Suppose that we are given $\y_j,\forall j\in[m]$, \textit{i.e.}, the true label vectors of unlabeled training instances. By solving the following equation, we can obtain $\tau(\alpha_k)$ and $\tau(\beta_k)$ that capture the true class distribution of unlabeled data.
\begin{equation*}
\frac{\sum_{j=1}^m\mathbb{I}(f_k(\x_j)\geq\tau(\alpha_k))}{m}=\gamma_k^*,\quad\frac{\sum_{j=1}^m\mathbb{I}(f_k(\x_j)\leq\tau(\beta_k))}{m}=\rho_k^*,
\end{equation*}
where $\gamma_k^*=\frac{\sum_{j=1}^m\mathbb{I}(y_{jk}=1)}{m}$ and $\rho_k^*=\frac{\sum_{j=1}^m\mathbb{I}(y_{jk}=0)}{m}$ are respectively the proportions of positive and negative labels in unlabeled data for class $k$. Although during the training process, the true labels of unlabeled instances are inaccessible, our observation shows that the estimated class distribution, i.e., the class proportions of positive and negative labels in labeled examples, can tightly approximate the true class distribution. As shown Figure \ref{fig:pesudo_labeling} (b), we illustrate the proportions of positive labels in labeled examples and unlabeled examples for every class $k$ on two benchmark datasets COCO and VOC. The proportions of labeled examples are respectively $p = 0.05$ and $p = 0.1$ for COCO and VOC. From the figures, it can be observed that even with a small proportion of labeled examples ($p = 0.05$), it achieves a nearly complete overlap between the estimated and true curves, which validates that the estimated class distribution can be a tight approximation of the true one. 
This motivate us to alternatively utilize the estimated class distribution to solve the solutions for $\tau(\alpha_k)$ and $\tau(\beta_k)$:
\begin{equation}\label{eq:thre}
\frac{\sum_{j=1}^m\mathbb{I}(f_k(\x_j)\geq\tau(\alpha_k))}{m}=\hat{\gamma}_k,\quad\frac{\sum_{j=1}^m\mathbb{I}(f_k(\x_j)\leq\tau(\beta_k))}{m}=\hat{\rho}_k,	
\end{equation}
where $\hat{\gamma}_k=\frac{\sum_{i=1}^n\mathbb{I}(y_{ik}=1)}{n}$ and $\hat{\rho}_k=\frac{\sum_{i=1}^n\mathbb{I}(y_{ik}=0)}{n}$ are respectively the proportions of positive and negative labels in labeled data for class $k$. Figure \ref{fig:pesudo_labeling} provides an illustration of the comparison between three instance-aware pseudo-labeling methods and the CAP method. By utilizing CAT strategy, CAP is expected to assign pseudo-labels with the class distribution that approximates the true one. 

In practice, to further improve the performance of CAP, one feasible solution is to discard a fraction of unreliable pseudo-labels with relatively low confidences, which may have a negative impact on the model training. Specifically, for any class $k\in[q]$, we select top $\eta_1\cdot\hat{\gamma}_k$ and $\eta_0\cdot\hat{\rho}_k$ proportion probable pseudo-labels, where $\eta_1,\eta_0\in[0,1]$ are two parameters to control the reliable intervals of pseudo-labels. By substituting the two terms into the right sides of Eq.\eqref{eq:thre}, we can obtain the thresholds correspondingly. In Section \ref{sec:reliable}, we perform ablation experiments to study the influence of reliable intervals on the model performance.

\section{Theoretical Analysis}

In this section, we perform theoretical analyses for the proposed method. In general, the performance of pseudo-labeling depends mainly on two factors, i.e., the quality of the model predictions and the correctness of estimated class distribution. Our work focuses on the latter. Consider an extreme case, where the model predictions are perfect, i.e., the confidences of positive labels are always greater than that of negative labels. In such a case, we still need an appropriate threshold to precisely separate the positive and negative labels. This implies that we need to capture the true class distribution of unlabeled data in order to achieve desirable pseudo-labeling performance. 

\subsection{Correctness of the Estimated Class Distribution}
To study the correctness of estimated class distribution, we provide the following theorem, which gives an upper bound on the difference between the estimated class proportion $\hat\gamma_k$ and the true class proportion $\gamma_k^*$ (its proof is given in Appendix A). A similar result can be derived for $\hat\rho_k$.

\begin{thm}\label{thm:cor}
	Assume the estimated class proportion $\hat\gamma_k=\frac{1}{n}\sum_{i=1}^n\mathbb{I}(y_{ik}=1)$, and the true class proportion $\gamma_k^*=\frac{1}{m}\sum_{j=1}^m\mathbb{I}(y_{jk}=1)$ for any $k\in[q]$, where $n$ and $m$ are the numbers of labeled and unlabeled examples that satisfy $m>>n$. Then, with the probability larger than $1-2n^{-1}-2m^{-1}$, we have, $\forall k\in[q], |\hat\gamma_k-\gamma_k^*|\leq\frac{\sqrt{\log n}}{\sqrt{2n}}+\frac{\sqrt{\log m}}{\sqrt{2m}}$.
\end{thm}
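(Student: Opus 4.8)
The plan is to route both empirical proportions through a common population parameter and apply a concentration inequality to each. The key observation, implicit in the semi-supervised setting, is that the labeled examples $\{(\x_i,\y_i)\}_{i=1}^n$ and the unlabeled examples $\{\x_j\}_{j=1}^m$ (together with their latent true labels) are drawn i.i.d. from the same joint distribution over $(\x,\y)$. Writing $p_k=\Pr(y_k=1)$ for the true probability that class $k$ is positive, it follows that $\hat\gamma_k=\frac1n\sum_{i=1}^n\mathbb{I}(y_{ik}=1)$ is the empirical mean of $n$ i.i.d. Bernoulli$(p_k)$ variables, while $\gamma_k^*=\frac1m\sum_{j=1}^m\mathbb{I}(y_{jk}=1)$ is the empirical mean of $m$ such variables. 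Both therefore concentrate around the \emph{same} mean $p_k$, and the triangle inequality $|\hat\gamma_k-\gamma_k^*|\le|\hat\gamma_k-p_k|+|p_k-\gamma_k^*|$ reduces the problem to controlling each deviation separately.

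First I would apply Hoeffding's inequality to each estimator. Since each indicator takes values in $\{0,1\}\subseteq[0,1]$, for any $t>0$ we have $\Pr(|\hat\gamma_k-p_k|\ge t)\le 2\exp(-2nt^2)$, and likewise $\Pr(|\gamma_k^*-p_k|\ge s)\le 2\exp(-2ms^2)$ for any $s>0$. The next step is to tune the two deviation radii so that the tail probabilities take the desired form: setting $t=\sqrt{\log n}/\sqrt{2n}$ yields $2\exp(-2nt^2)=2n^{-1}$, and setting $s=\sqrt{\log m}/\sqrt{2m}$ yields $2\exp(-2ms^2)=2m^{-1}$.

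Finally, I would combine the two events via a union bound. On the complement of the union of the two failure events, which has probability at least $1-2n^{-1}-2m^{-1}$, both deviation bounds hold simultaneously, so the triangle inequality gives $|\hat\gamma_k-\gamma_k^*|\le t+s=\sqrt{\log n}/\sqrt{2n}+\sqrt{\log m}/\sqrt{2m}$, which is precisely the claimed bound.

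The argument is essentially a routine concentration calculation, so there is no single hard computational step; the point requiring the most care is conceptual rather than technical. The whole chain hinges on treating $\hat\gamma_k$ and $\gamma_k^*$ as two empirical averages of the \emph{same} population quantity $p_k$, which is only legitimate under the (unstated but standard) assumption that the labeled and unlabeled data share the same distribution, so I would state this assumption explicitly at the outset. A secondary subtlety is that the success probability $1-2n^{-1}-2m^{-1}$ is a per-class guarantee obtained from the union over the two failure events for a fixed $k$; making it hold uniformly over all $q$ classes would require an additional union bound and a correspondingly weaker probability (or a $\log q$ factor absorbed into the radii). The hypothesis $m\gg n$ is not needed for validity of the inequality itself—it merely signals that the $\sqrt{\log m}/\sqrt{2m}$ term is negligible relative to $\sqrt{\log n}/\sqrt{2n}$, so the estimation error is dominated by the scarcity of labeled data.
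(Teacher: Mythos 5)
Your proof is correct and follows essentially the same route as the paper's: both introduce the common population proportion $\bar\gamma_k=p(y_k=1)$, apply Hoeffding's inequality to the labeled and unlabeled empirical means with radii $\sqrt{\log n}/\sqrt{2n}$ and $\sqrt{\log m}/\sqrt{2m}$ (giving failure probabilities $2n^{-1}$ and $2m^{-1}$), and conclude via the triangle inequality and a union bound. Your side remark about uniformity over $k$ is apt---the paper's own proof shares that looseness, asserting the bound for all $k\in[q]$ while only unioning over the two failure events for a fixed $k$.
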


Theorem \ref{thm:cor} tells us that the correctness of the estimated class distribution mainly depends on the number of labeled and unlabeled data. In general, the bound is dominated by the first term, since it always satisfies $m>>n$. By neglecting the second term, we can see that $\forall k\in[q], \gamma^*_k\rightarrow\hat{\gamma}_k$ in the parametric rate $\mathcal{O}_p(1/\sqrt{n})$, where $\mathcal{O}_p$ denotes the order in probability. Obviously, as the number of training examples increase, the estimated class distribution would quickly converge to the true one. 


\subsection{Generalization Bound}

Moreover, we study the generalization performance of CAP. Before providing the main results, we first define the true risk with respect to the classification model $f(\x;\theta)$:
\begin{equation*}
R(f)=\mathbb{E}_{(\x,\y)}\left[\mathcal{L}(f(\x),\y)\right].
\end{equation*}
Our goal is to learn a good classification model by minimizing the empirical risk $\widehat R(f)=\widehat R_l(f)+\widehat R_u(f)$, where $\widehat R_l(f)$ and $\widehat R_u(f)$ are respectively the empirical risk of the labeled loss $\mathcal{L}_l(f(\x),\y)$ and unlabeled loss $\mathcal{L}_u(f(\x),\y)$:
\begin{equation*}
\widehat{R}_l(f)=\frac{1}{n}\sum\nolimits_{i=1}^n\mathcal{L}(f(\x_i),\y_i),\quad \widehat{R}_u(f)=\frac{1}{m}\sum\nolimits_{j=1}^m\mathcal{L}_u(f(\x_j),\y_j).
\end{equation*}
Note that during the training, we cannot train a model directly by optimizing $\widehat{R}_u(f)$, since the labels of unlabeled data are inaccessible. Instead, we train the model with $\widehat R_u^{'}(f)=\frac{1}{m}\sum_{j=1}^m\mathcal{L}_u(f(\x_j),\hat{\y}_j)$, where $\hat{\y}_j$ represents the pseudo-label vector of the instance $\x_j$.

Let $\ell(f_k(\x))=y_k\ell_1(f_k(\x))+(1-y_k)\ell_0(f_k(\x))$ be the loss for the class $k$, and $L_{\ell}$ be any (not necessarily the best) Lipschitz constant of $\ell$. Let $\mathcal{R}_{N}(\mathcal{F})$ be the expected Rademacher complexity \cite{mohri2018foundations} of $\mathcal{F}$ with $N=m+n$ training points. Let $\hat{f}=\arg\min_{f\in\mathcal{F}}\widehat{R}(f)$ be the empirical risk minimizer, where $\mathcal{F}$ is a function class, and $f^{\star}=\arg\min_{f\in\mathcal{F}}R(f)$ be the true minimizer.  We derive the following theorem, which provides a generalization error bound for the proposed method (its proof is given in Appendix B). 
\begin{thm}\label{thm:bound}
	Suppose that $\ell(\cdot)$ is bounded by $B$. For some $\epsilon>0$, if $\sum_{j=1}^m|\mathbb{I}(f_k(x_j)\geq\tau(\alpha_k))-\mathbb{I}(y_{jk}=1)|/m\leq\epsilon$ for any $k\in[q]$, for any $\delta>0$, with probability at least $1-\delta$, we have
	\begin{equation*}
	\begin{aligned}
	R(\hat{f})-R(f^\star)\leq 2qB\epsilon+4qL_{\ell}R_{N}(\mathcal{F})+2qB\sqrt{\frac{\log\frac{2}{\delta}}{2N}}.
	\end{aligned}
	\end{equation*}
\end{thm}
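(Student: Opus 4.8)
The plan is to relate the excess risk $R(\hat f)-R(f^\star)$ to three sources of error: an optimization gap, a pseudo-labeling error governed by $\epsilon$, and an estimation error governed by the Rademacher complexity. To this end I would introduce the trainable objective $\widehat R'(f)=\widehat R_l(f)+\widehat R'_u(f)$ that uses the pseudo-labels and take $\hat f$ to be the minimizer of this trainable objective (the model actually produced during training), while $\widehat R(f)=\widehat R_l(f)+\widehat R_u(f)$ uses the unobserved true labels. The first step is the standard error decomposition obtained by inserting intermediate terms:
\begin{align*}
R(\hat f)-R(f^\star)=\ &[R(\hat f)-\widehat R(\hat f)]+[\widehat R(\hat f)-\widehat R'(\hat f)]+[\widehat R'(\hat f)-\widehat R'(f^\star)]\\
&+[\widehat R'(f^\star)-\widehat R(f^\star)]+[\widehat R(f^\star)-R(f^\star)].
\end{align*}
The middle bracket is nonpositive because $\hat f$ minimizes $\widehat R'$, so it can be dropped.

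Next I would bound the two pseudo-labeling brackets. Since $\widehat R$ and $\widehat R'$ differ only in their unlabeled parts, for any fixed $f$ the difference $\widehat R'_u(f)-\widehat R_u(f)$ telescopes over classes to $\frac1m\sum_{j,k}(\hat y_{jk}-y_{jk})[\ell_1(f_k(\x_j))-\ell_0(f_k(\x_j))]$. Each summand vanishes unless the pseudo-label disagrees with the true label, and the bracketed loss difference is at most $B$ since $\ell$ is bounded by $B$; summing the per-class disagreement rates, each of which is at most $\epsilon$ by hypothesis, yields $|\widehat R'(f)-\widehat R(f)|\le qB\epsilon$ uniformly in $f$. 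Applying this to $\hat f$ and to $f^\star$ contributes the $2qB\epsilon$ term.

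The remaining two brackets form an estimation error that I would control by uniform convergence over $\mathcal{F}$. Treating $\widehat R$ as an empirical average over the $N=n+m$ points, I would bound both $\sup_f(R(f)-\widehat R(f))$ and $\sup_f(\widehat R(f)-R(f))$ by the same quantity: symmetrization produces a $2\mathcal{R}_N(\mathcal{L}\circ\mathcal{F})$ term, and since the loss decomposes additively over the $q$ classes with each $\ell$ being $L_{\ell}$-Lipschitz, Talagrand's contraction lemma peels off the loss to give $\mathcal{R}_N(\mathcal{L}\circ\mathcal{F})\le qL_{\ell}\mathcal{R}_N(\mathcal{F})$. The total loss is bounded by $qB$, so changing one sample perturbs $\widehat R$ by at most $qB/N$, and McDiarmid's inequality adds the $qB\sqrt{\log(2/\delta)/(2N)}$ deviation term. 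Summing the two directions gives $4qL_{\ell}\mathcal{R}_N(\mathcal{F})+2qB\sqrt{\log(2/\delta)/(2N)}$, and combining with the pseudo-labeling term completes the bound.

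I expect the main obstacle to be the estimation-error step rather than the pseudo-labeling step: one must apply the contraction lemma to the additive multi-label loss so that the per-class Lipschitz constant and the factor $q$ emerge exactly as stated, and simultaneously verify the bounded-difference constant $qB/N$ for McDiarmid under the joint treatment of the labeled and unlabeled samples. The pseudo-labeling bracket, by contrast, is a direct consequence of boundedness once the loss difference is telescoped, the only delicate point being to identify $\hat y_{jk}$ with the thresholded indicator $\mathbb{I}(f_k(\x_j)\ge\tau(\alpha_k))$ so that the hypothesis on $\epsilon$ applies verbatim.
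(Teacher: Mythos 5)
Your proposal matches the paper's proof essentially step for step: the paper likewise combines a two-sided uniform deviation bound $\sup_{f\in\mathcal{F}}|R(f)-\widehat{R}(f)|\leq 2qL_{\ell}\mathcal{R}_{N}(\mathcal{F})+qB\sqrt{\log(2/\delta)/(2N)}$ (via symmetrization, contraction over the $q$ additive loss components, and McDiarmid with bounded difference $qB/N$) with a lemma $|\widehat{R}'_u(f)-\widehat{R}_u(f)|\leq qB\epsilon$, applied at both $\hat{f}$ and $f^{\star}$ together with the minimizing property of $\hat{f}$ for the pseudo-labeled objective---exactly your five-bracket decomposition, written in the paper as a chain of inequalities. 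The only difference is cosmetic: you prove the $qB\epsilon$ lemma by telescoping $(\hat{y}_{jk}-y_{jk})\bigl(\ell_1(f_k(\x_j))-\ell_0(f_k(\x_j))\bigr)$, which is slightly cleaner than the paper's argument splitting the disagreement rate into false-positive and false-negative parts $\epsilon_0,\epsilon_1$ and bounding $\widehat{R}'_u$ from above and below, but both yield the identical bound.
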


From Theorem \ref{thm:bound}, it can be observed that the generalization performance of $\hat{f}$ mainly depends on two factors, \textit{i.e.}, the pseudo-labeling error $\epsilon$ and the number of training examples $N$. Apparently, a smaller pseudo-labeling error $\epsilon$ often leads to better generalization performance. Thanks to its ability to capture the true class distribution, CAP can achieve a much smaller pseudo-labeling error $\epsilon$ than existing instance-aware pseudo-labeling methods, which is beneficial for obtaining better classification performance. This can be further validated by our empirical results in Section \ref{sec:pseudo}. The second factor is the number of training examples. As $N\rightarrow\infty$ and $\epsilon\rightarrow 0$, Theorem \ref{thm:bound} shows that the empirical risk minimizer $\hat{f}$ converges to the true risk minimizer $f^\star$.

\section{Experiments}

In this section, we first perform experiments to validate the effectiveness of the proposed method; then, we perform ablation studies to analyze the mechanism behind CAP.

\subsection{Experimental Settings}

\textbf{Datasets}\quad To evaluate the performance of the propose method, we conduct experiments on three benchmark image datasets, including Pascal VOC-2012 (VOC for short) \footnote{\url{http://host.robots.ox.ac.uk/pascal/VOC/}} \cite{everingham2015pascal}, MS-COCO-2014 (MS-COCO for short) \footnote{\url{https://cocodataset.org}} \cite{lin2014microsoft}, and NUS-WIDE (NUS for short) \footnote{\url{https://lms.comp.nus.edu.sg/wp-content/uploads/2019/research/nuswide/NUS-WIDE.html}} \cite{chua2009nus}. The detailed information of these datasets can be found in the appendix. For each dataset, we randomly sample a proportion $p\in\{0.05,0.1,0.15,0.2\}$ of examples with full labels while the others without any supervised information. Following the previous works \cite{cole2021multi,zhou2022acknowledging}, we report the mean average precision (mAP) on the test set for each method.

\textbf{Comparing methods}\quad To validate the effectiveness of the proposed method, we compare it with five groups of methods: 1) three instance-aware pseudo-labeling methods: \textbf{Top-1} (Eq.\eqref{eq:top1}), \textbf{Top-$\bm{k}$} (Eq.\eqref{eq:topk}), \textbf{IAT} (Eq.\eqref{eq:iat}); 2) two state-of-the-art MLML methods: \textbf{LL} \cite{kim2022large} (includes three variants \textbf{LL-R}, \textbf{LL-Ct}, and \textbf{LL-Cp}), \textbf{PLC} \cite{xie2022labelaware}; 3) Two state-of-the-art SSL methods: \textbf{Adsh} \cite{guo2022class}, \textbf{FreeMatch} \cite{wang2023freematch}; 4) One state-of-the-art SSMLL method: \textbf{DRML} \cite{wang2020dual}; 5) Two baseline methods, \textbf{BCE}, \textbf{ASL} \cite{ridnik2021asymmetric}. DRML is the only deep SSMLL method whose source code could be found on the Internet. Furthermore, most MLML methods cannot be applied to the SSMLL scenario, since they assume that a subset of labels have been annotated for each training instance. The detailed information of these methods can be found in the appendix.

\textbf{Implementation}\quad We employ ResNet-50 \cite{he2016deep} pre-trained on ImageNet \cite{ILSVRC15} for training the classification model. We adopt RandAugment \cite{CubukZS020} and Cutout \cite{devries2017improved} for data augmentation. We employ AdamW \cite{LoshchilovH19} optimizer and one-cycle policy scheduler \cite{devries2017improved} to train the model with maximal learning rate of 0.0001. The number of warm-up epochs is set as 12 for all datasets. The batch size is set as 32, 64, and 64 for VOC, MS-COCO, and NUS. Furthermore, we perform exponential moving average (EMA) for the model parameter $\theta$ with a decay of 0.9997. For all methods, we use the ASL loss as the base loss function, since it shows superiority to BCE loss \cite{ridnik2021asymmetric}. We perform all experiments on GeForce RTX 3090 GPUs. The random seed is set to 1 for all experiments.

\begin{table*}[!t]
	\centering
	\small
	\caption{Comparison results on VOC and COCO in terms of mAP (\%). The best performance is highlighted in bold.}
	\begin{tabular}{l c c c c| c c c c}
		\toprule
		\multirow{2}*{Method}
		& \multicolumn{4}{c}{VOC} & \multicolumn{4}{c}{COCO} \\
		\cmidrule(lr){2-9}
		& $p=0.05$ & $p=0.1$ & $p=0.15$ & $p=0.2$ & $p=0.05$ & $p=0.1$ & $p=0.15$ & $p=0.2$ \\ 
		\midrule
		BCE       & 67.95          & 75.35          & 78.19          & 79.38          & 58.90           & 63.75          & 65.91          & 67.33          \\
		ASL       & 71.46          & 78.00             & 79.69          & 80.77          & 59.12          & 63.82          & 66.10           & 67.51          \\
		\midrule
		LL-R      & 75.69          & 80.96          & 82.31          & 83.55          & 59.31          & 64.25          & 66.61          & 68.01          \\
		LL-Ct     & 75.77          & 81.04          & 82.31          & 83.50           & 59.33          & 64.23          & 66.69          & 68.11          \\
		LL-Cp     & 75.79          & 81.03          & 82.36          & 83.68          & 59.27          & 64.19          & 66.68          & 68.12          \\
		PLC       & 74.49          & 80.35          & 82.35          & 83.39          & 59.85          & 65.03          & 67.62          & 69.14          \\
		\midrule
		Top-1     & 75.77          & 80.78          & 82.65          & 83.72          & 57.62          & 62.84          & 65.50           & 66.96          \\
		Top-$k$     & 75.07          & 80.20           & 81.99          & 83.16          & 58.25          & 63.52          & 66.11          & 67.49          \\
		IAT       & 73.24          & 80.27          & 82.39          & 83.55          & 60.34          & 65.54          & 67.88          & 69.25          \\
		\midrule
		ADSH      & 75.37          & 80.34          & 82.80           & 83.93          & 60.75          & 65.37          & 67.70           & 69.01          \\
		FreeMatch & 75.11          & 80.66          & 82.63          & 83.60           & 59.94          & 64.46          & 66.79          & 68.04          \\ 
		\midrule
		DRML      & 61.77          & 71.01          & 72.98          & 74.49          & 53.60           & 57.06          & 58.53          & 59.24          \\
		\midrule
		Ours      & \textbf{76.16} & \textbf{82.16} & \textbf{83.48} & \textbf{84.41} & \textbf{62.43} & \textbf{67.36} & \textbf{69.11} & \textbf{70.41} \\
		\bottomrule
	\end{tabular}
	\label{tb:voc_coco}
\end{table*}

\begin{table*}[!t]
	\centering
	\small
	\caption{Comparison results on NUS in terms of mAP (\%). The best performance is highlighted in bold.}
	\begin{tabular}{lcccccccccc}
		\toprule
		Method     & ASL   & LL-R  & PLC   & Top-1 & Top-$k$ & IAT    & ADSH  & FreeMatch & DRML & Ours           \\
		\midrule
		$p = 0.05$ & 42.87 & 40.20 & 43.55 & 40.99 & 40.89 & 42.58  & 43.94 & 43.12     & 30.61     & \textbf{44.82} \\
		$p = 0.10$ & 46.50 & 44.95 & 47.51 & 45.07 & 45.04 & 46.60   & 47.28 & 46.65     & 35.09     & \textbf{48.24} \\
		$p = 0.15$ & 48.42 & 47.32 & 49.75 & 47.43 & 47.22 & 48.76  & 49.22 & 48.74     & 37.91    & \textbf{49.90}  \\
		$p = 0.20$ & 49.65 & 48.31 & 50.71 & 48.49 & 48.37 & 49.62  & 49.93 & 49.59     & 39.98    & \textbf{51.06} \\
		\bottomrule
	\end{tabular}
	\label{tb:nus}
\end{table*}

\subsection{Comparison Results}


Table \ref{tb:voc_coco} and Table \ref{tb:nus} report the comparison results between CAP and the comparing methods in terms of mAP on VOC, COCO, and NUS. From the tables, we can see that: 1) DRML obtains unfavorable performance, even worse than baselines BCE and ASL, since it performs two-stage training, which may destroy its representation learning. The original paper did not report the results on these three datasets. Therefore, it is rather important to design an effective SSMLL method in deep learning paradigm. 2) CAP outperforms three instance-aware pseudo-labeling methods, which demonstrates that by utilizing CAT strategy, CAP can precisely estimate the class distribution of unlabeled data and thus obtain desirable pseudo-labeling performance. 3) The performance of CAP is better than that of two state-of-the-art SSL methods. To achieve better performance, we have made several modifications for these two methods not limited to the following: a) use the ASL loss ; b) adopt stronger data augmentations; c) change the training scheme to make them more suitable for the multi-label scenario. 4) CAP achieves the best performance in all cases and significantly outperforms the comparing methods, especially when the number of labeled examples is small. These results convincingly validate the effectiveness of the proposed method.

\subsection{Study on the Performance of Pseudo-Labeling}

\label{sec:pseudo}

In this section, we explain why CAP is better than the conventional instance-aware pseudo-labeling methods. Figure \ref{fig:pl} illustrates the performance of different pseudo-labeling methods in terms of CF1 score on VOC and COCO. From the figures, we can see that CAP achieves the best performance in all cases. As discussed above, the pseudo-labeling performance mainly depends on two factors, \textit{i.e.}, the quality of model predictions and the correctness of estimated class proportions. CAP improves the pseudo-labeling performance by precisely estimating the class distribution. An interesting observation is that at the first epoch, when the model predictions are the same for four methods, our method significantly outperforms the comparing methods, since it is able to capture the true class proportions. These results validate that CAP can achieve better pseudo-labeling performance.

\begin{figure*}[!tb]
	\centering
	\subfigure
	{	
		\centering
		\includegraphics[width=0.8\textwidth]{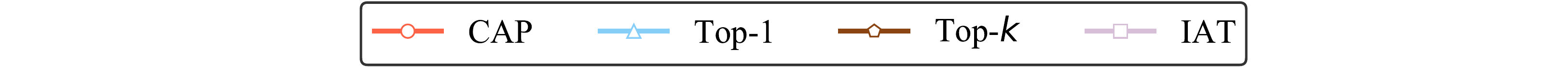}
	}\vspace{-0.1cm}

	\subfigure
	{	
		\centering
		\includegraphics[width=0.23\textwidth]{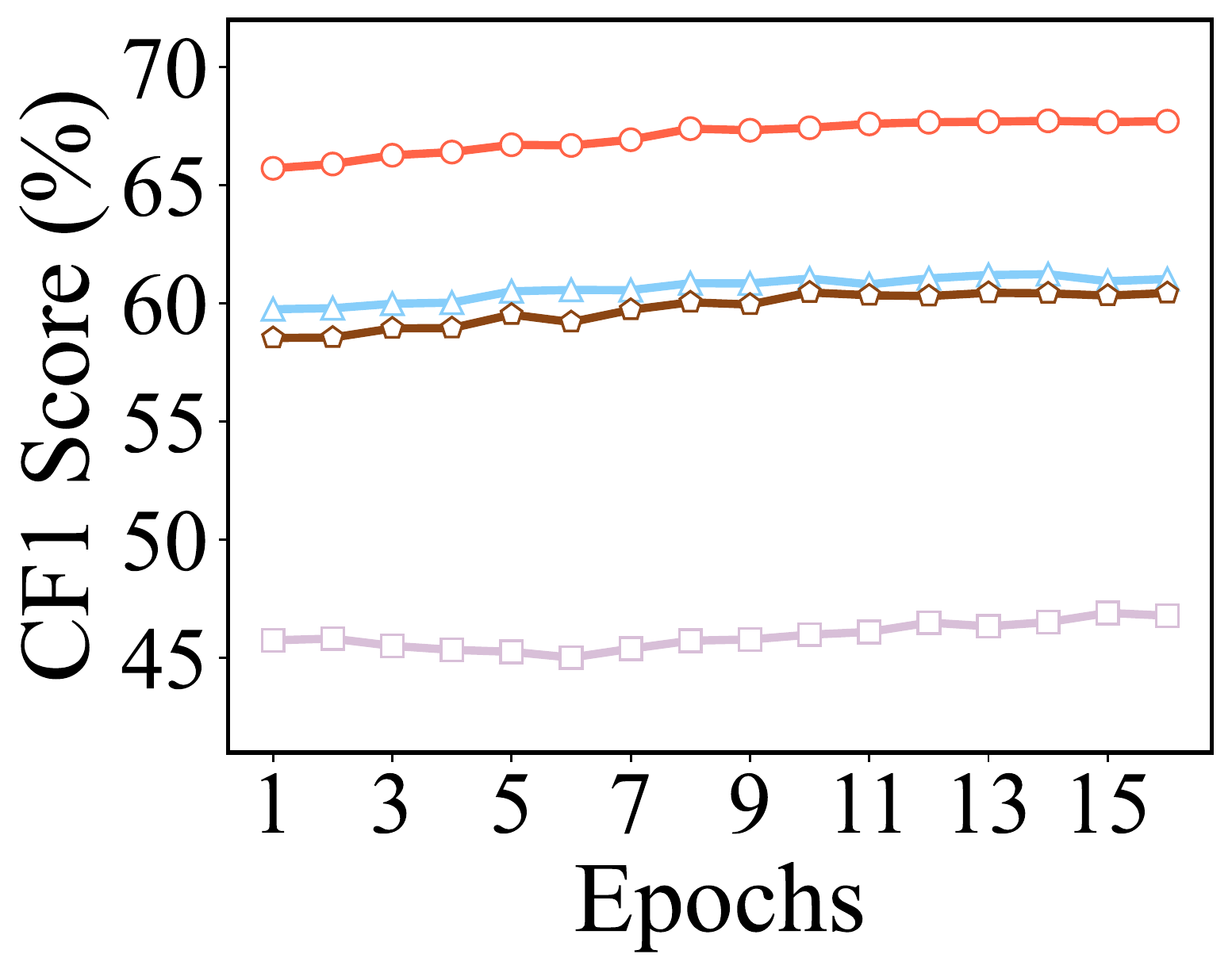}
	}
	\subfigure
	{	
		\centering
		\includegraphics[width=0.23\textwidth]{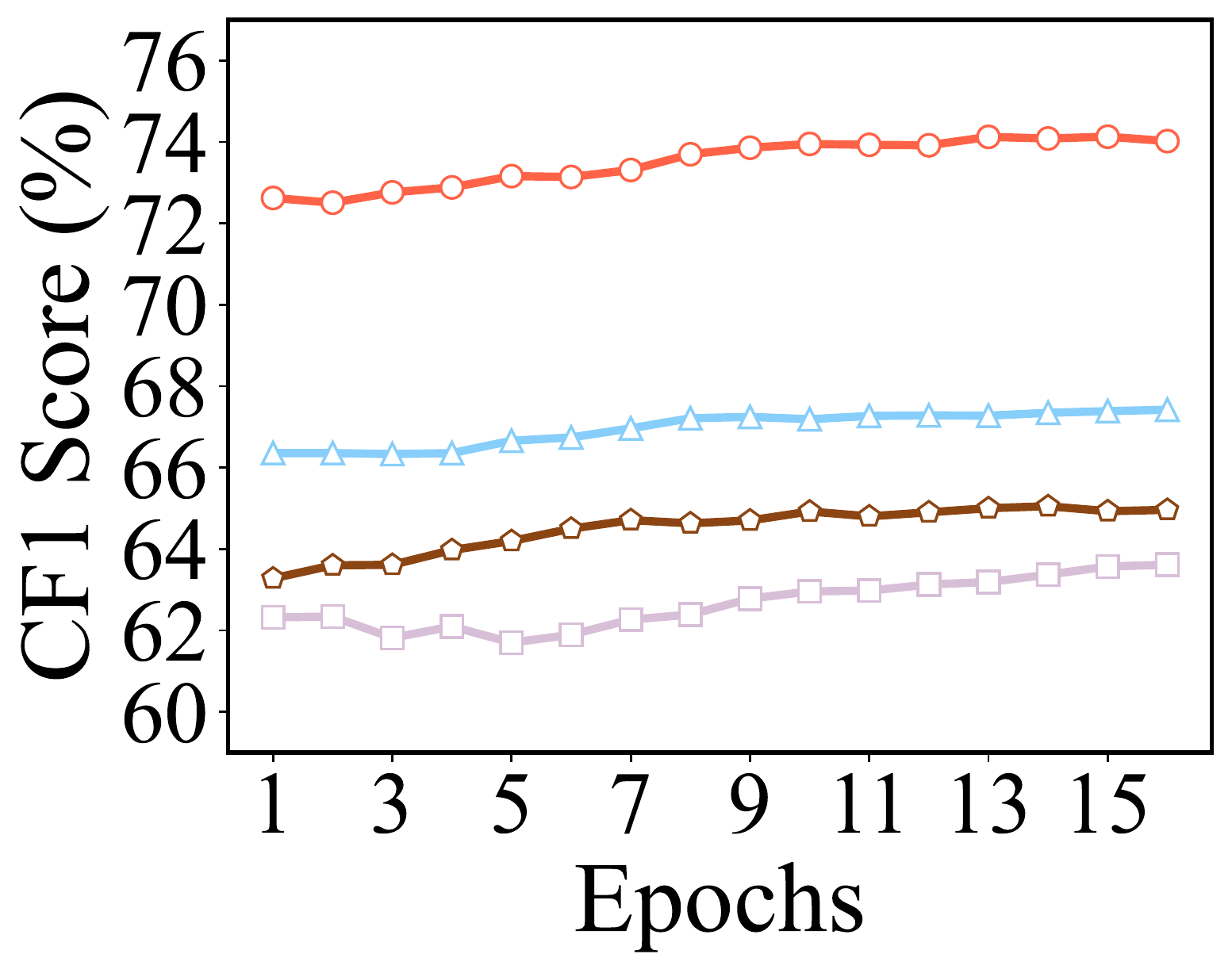}
	} 
	\subfigure
	{	
		\centering
		\includegraphics[width=0.23\textwidth]{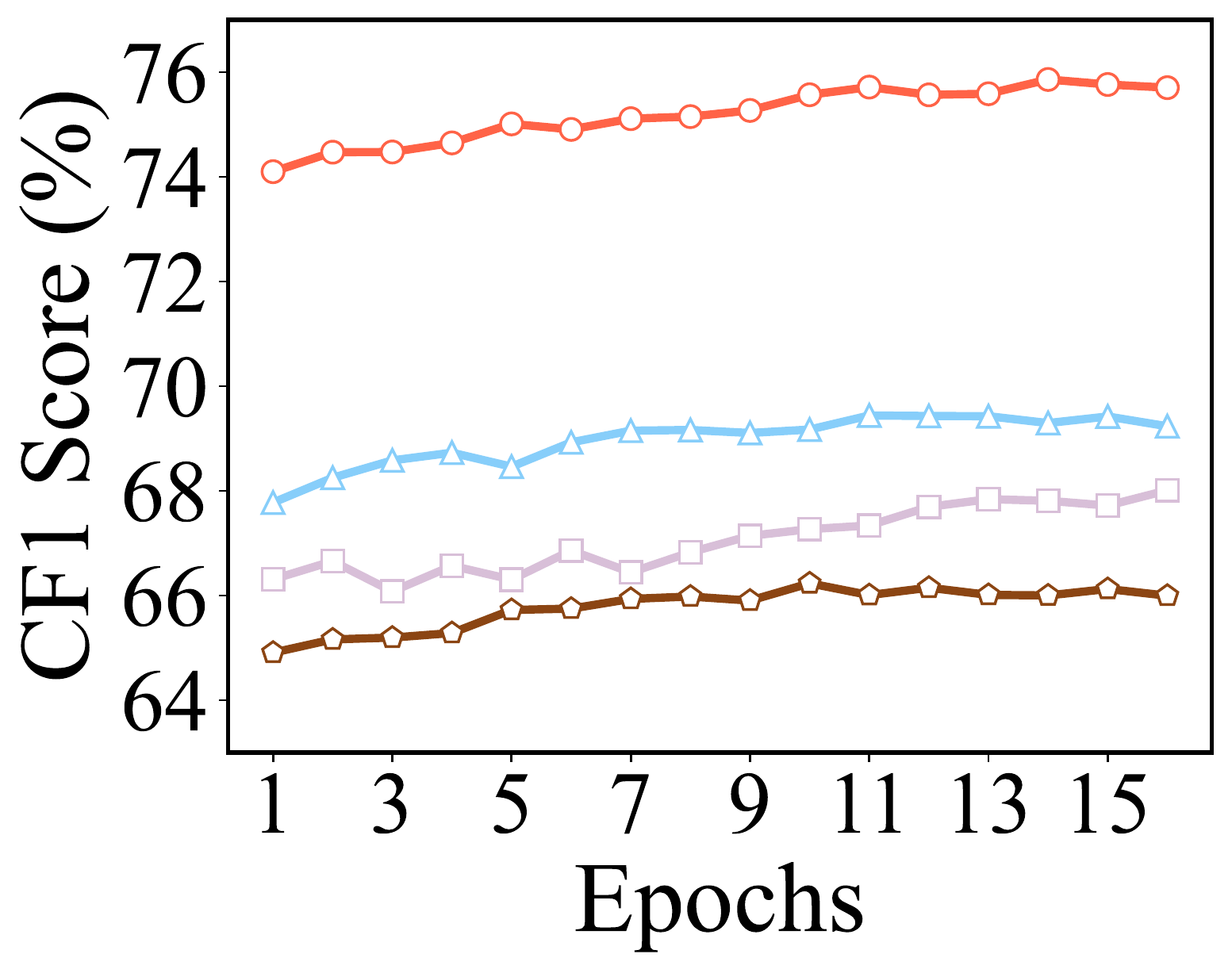}
	} 
	\subfigure
	{
		\centering
		\includegraphics[width=0.23\textwidth]{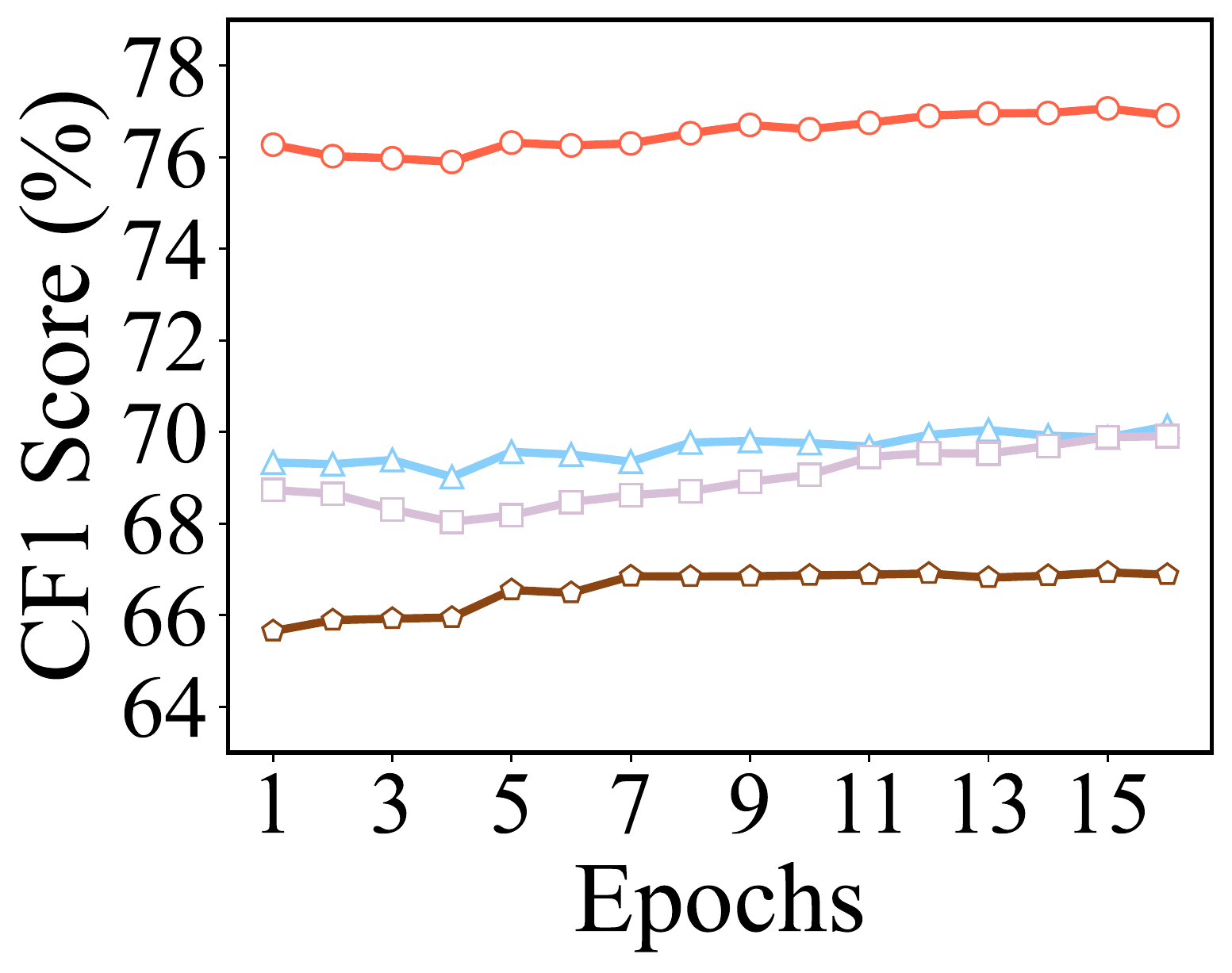}
	}

	\setcounter{subfigure}{0}
	\subfigure[$p=0.05$]
	{	
		\centering
		\includegraphics[width=0.23\textwidth]{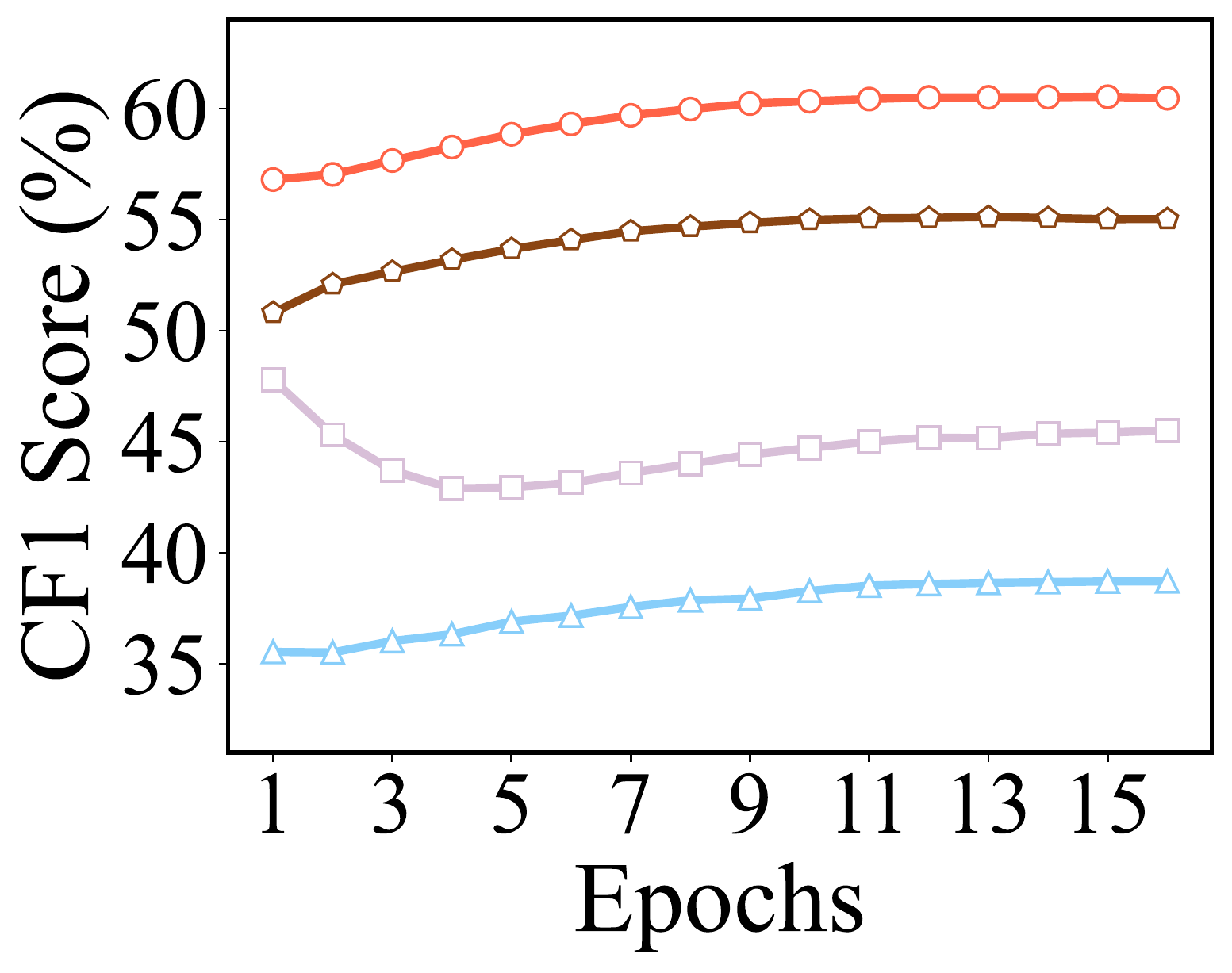}
	}
	\subfigure[$p=0.1$]
	{	
		\centering
		\includegraphics[width=0.23\textwidth]{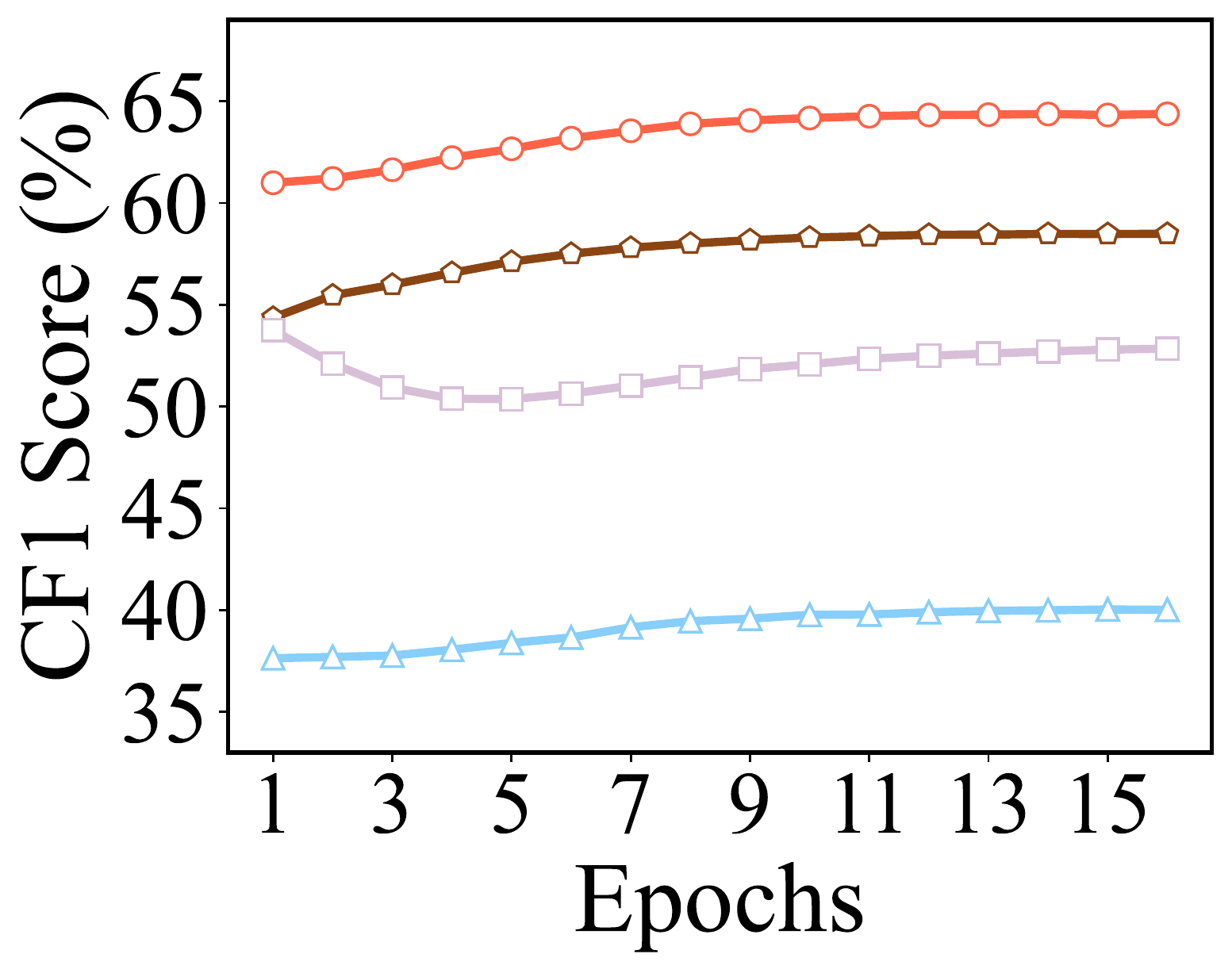}
	} 
	\subfigure[$p=0.15$]
	{	
		\centering
		\includegraphics[width=0.23\textwidth]{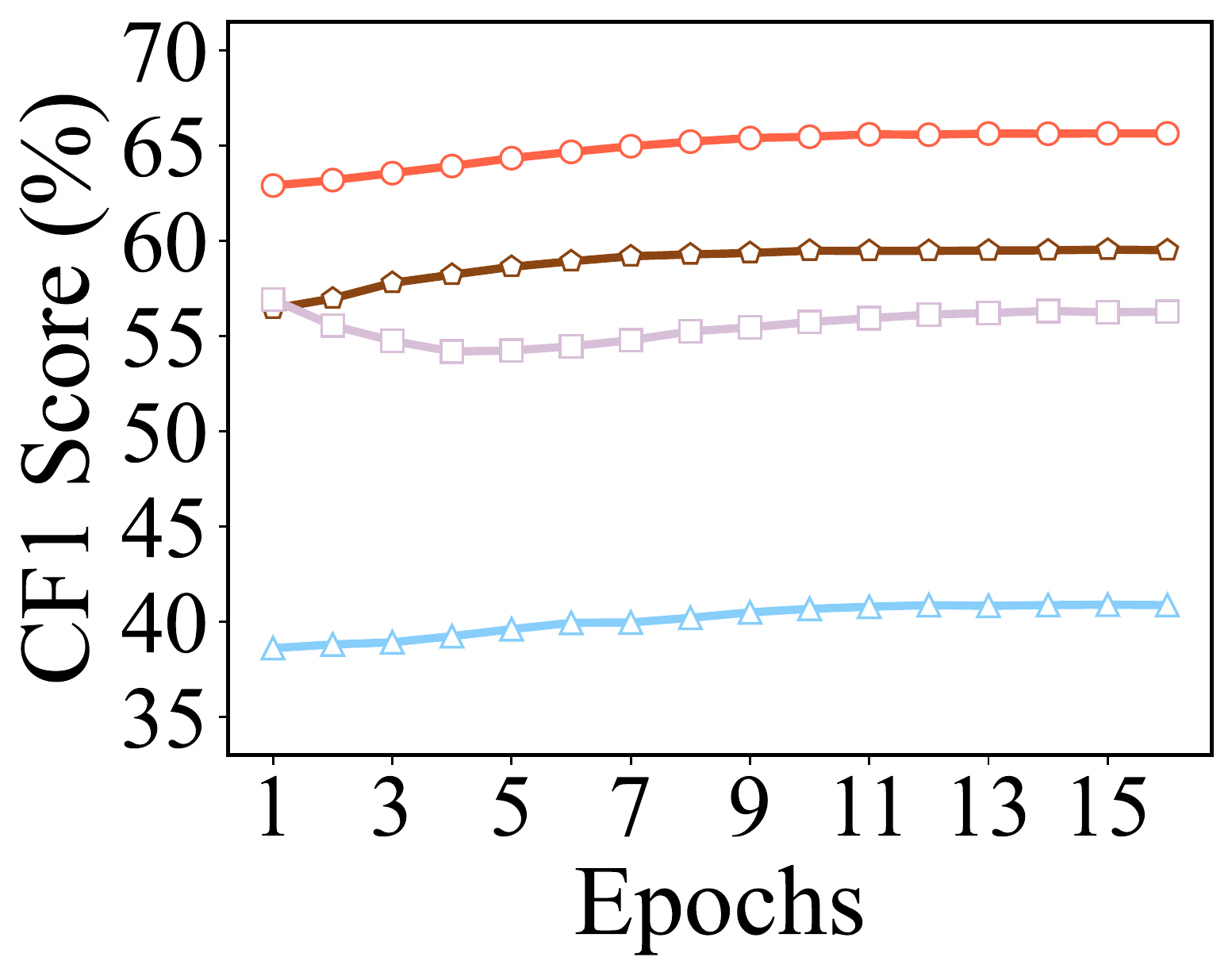}
	} 
	\subfigure[$p=0.2$]
	{
		\centering
		\includegraphics[width=0.23\textwidth]{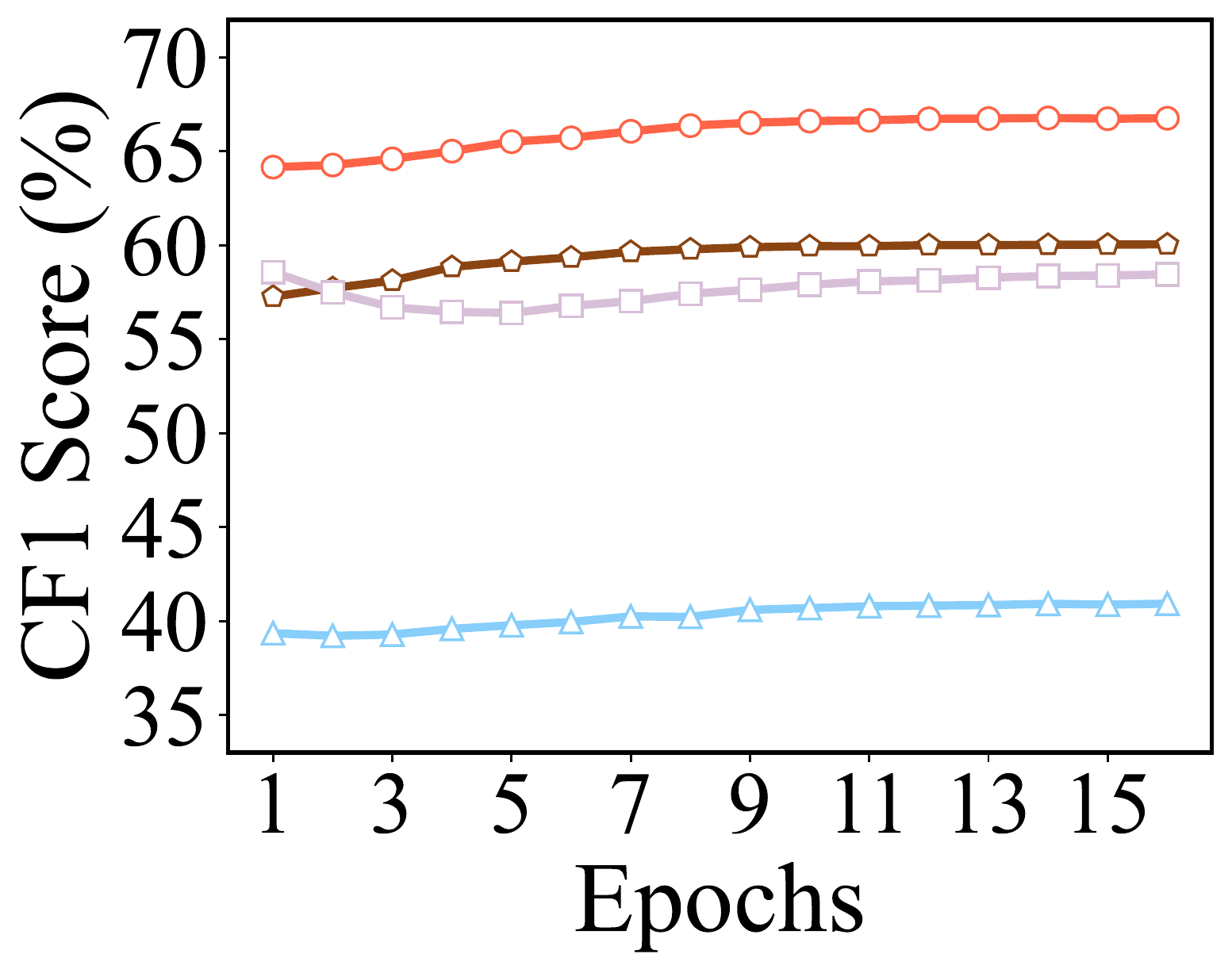}
	}

	\caption{Pseudo-labeling performance in terms of OF1 score on VOC, COCO. Each row corresponds one dataset.}
	\label{fig:pl}
\end{figure*}

\begin{figure*}[!tb]
	\centering
	\subfigure[VOC, $\eta_0=1$]
	{	
		\centering
		\includegraphics[width=0.23\textwidth]{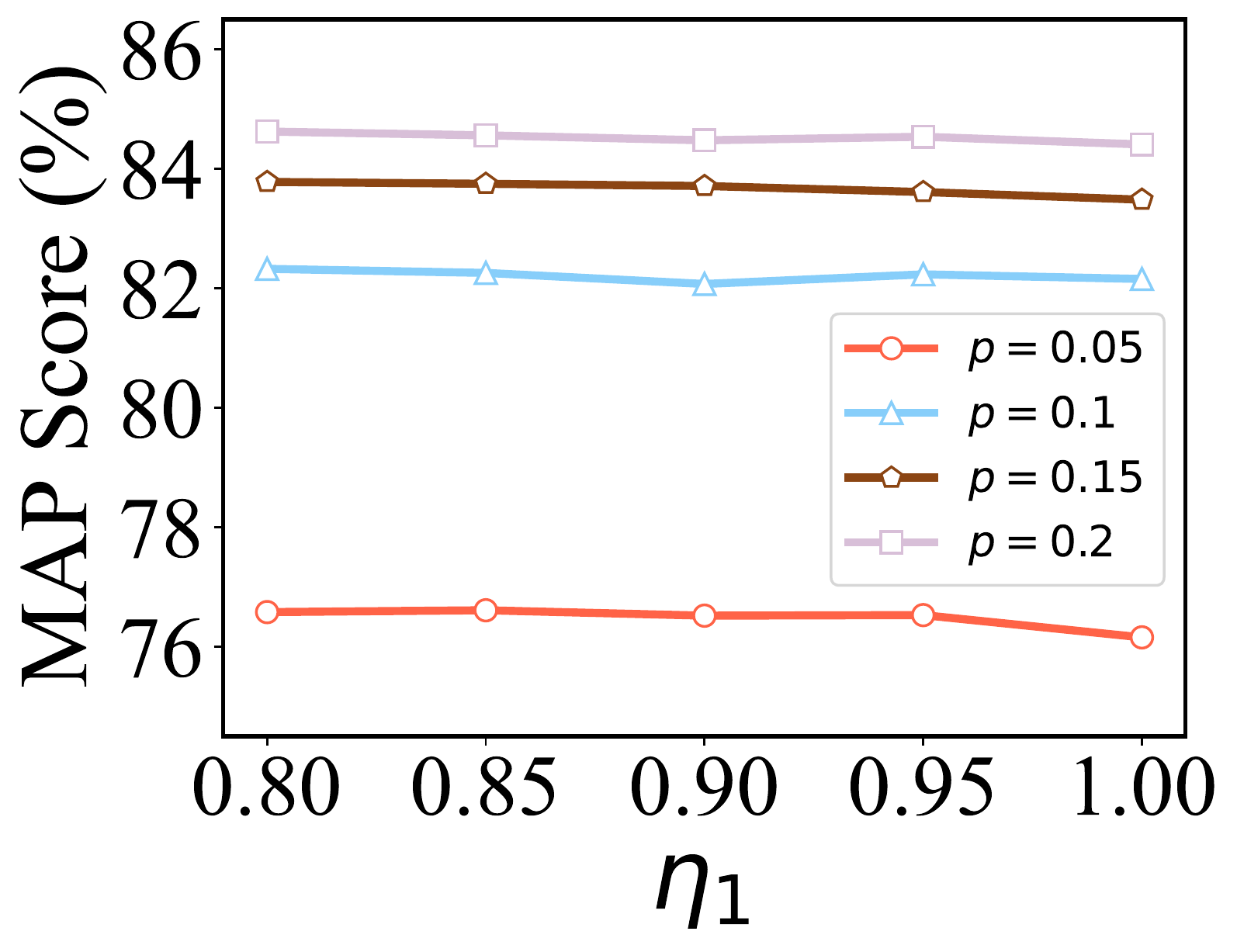}
	}
	\subfigure[VOC, $\eta_1=1$]
	{	
		\centering
		\includegraphics[width=0.23\textwidth]{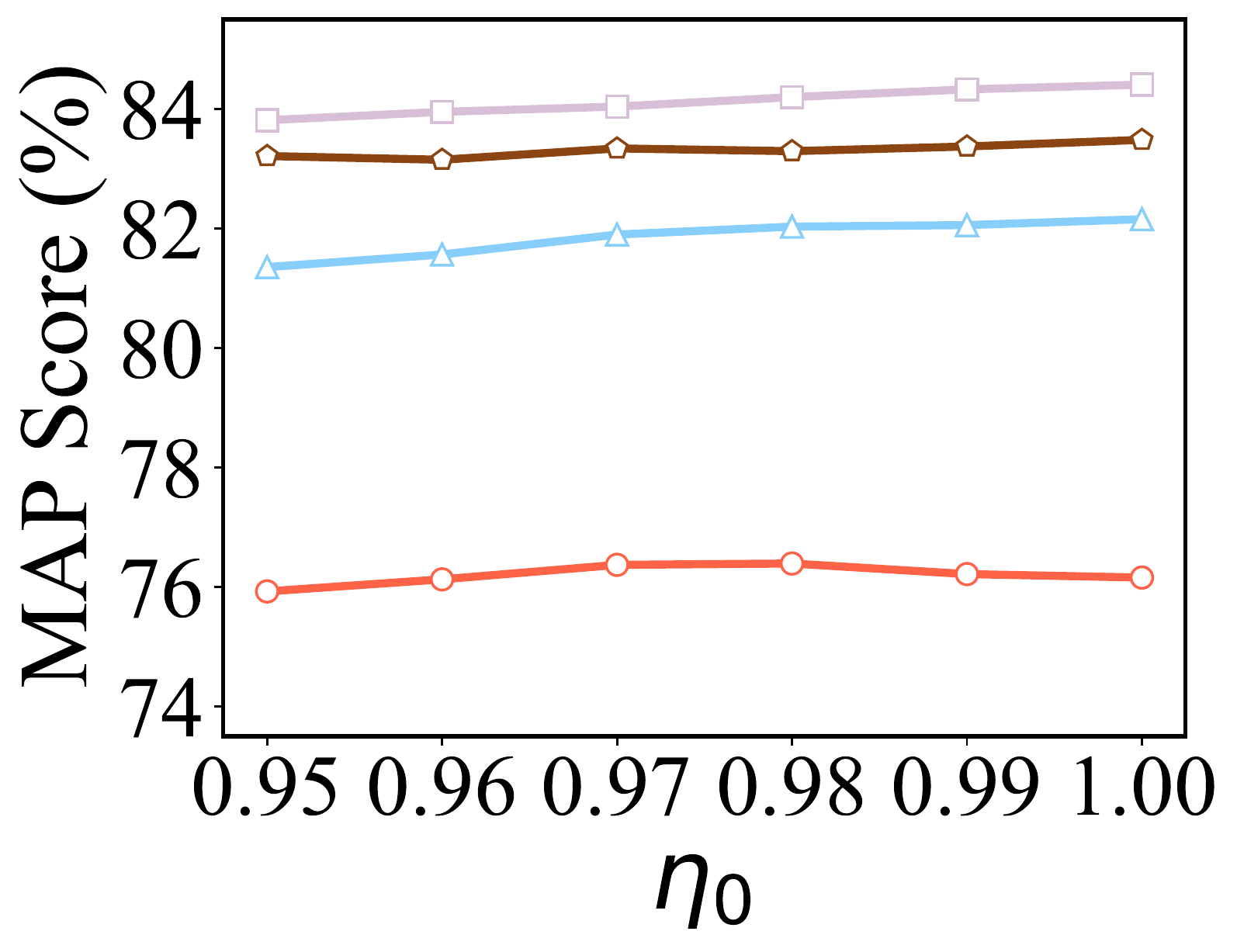}
	} 
	\subfigure[COCO, $\eta_0=1$]
	{	
		\centering
		\includegraphics[width=0.23\textwidth]{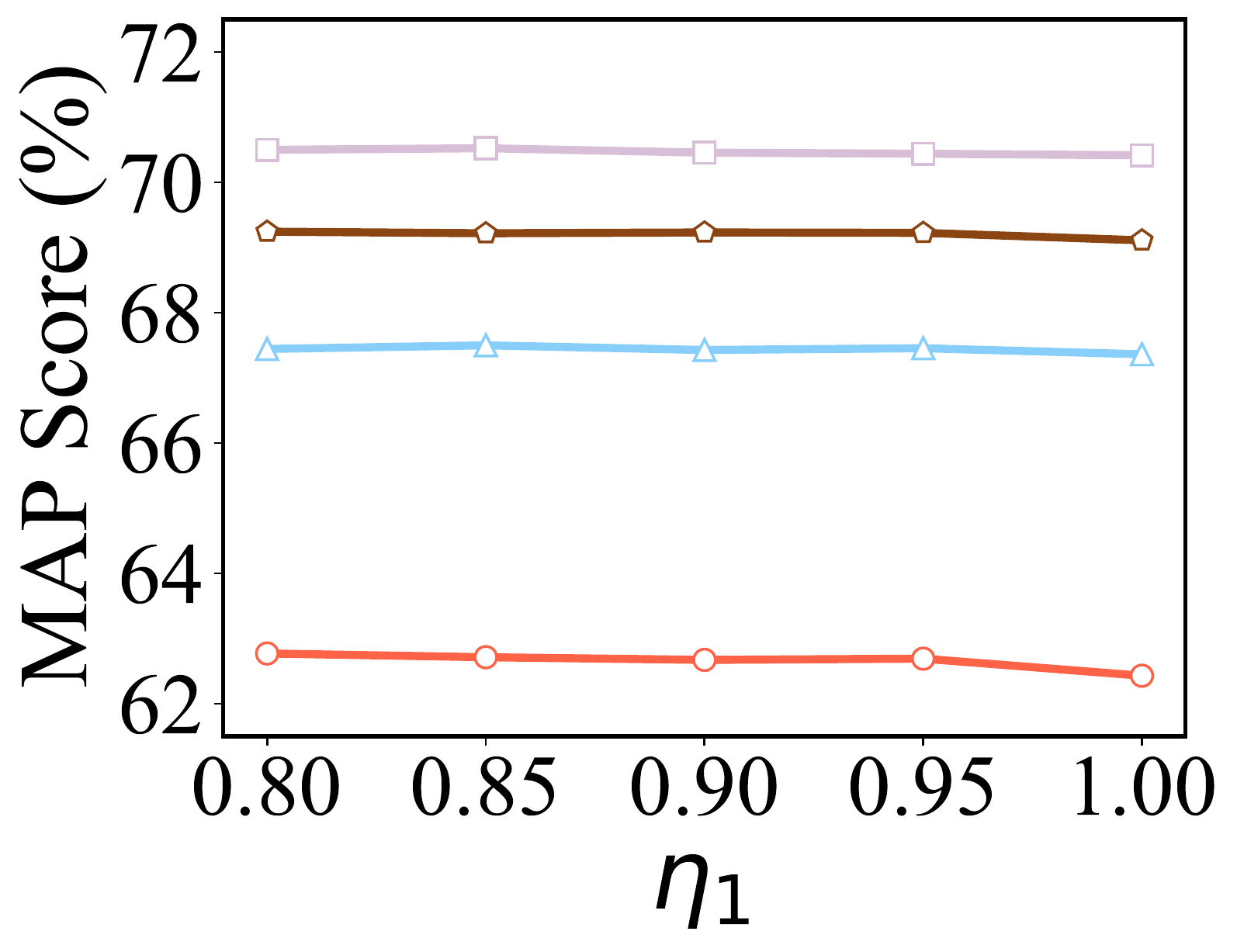}
	} 
	\subfigure[COCO, $\eta_1=1$]
	{
		\centering
		\includegraphics[width=0.23\textwidth]{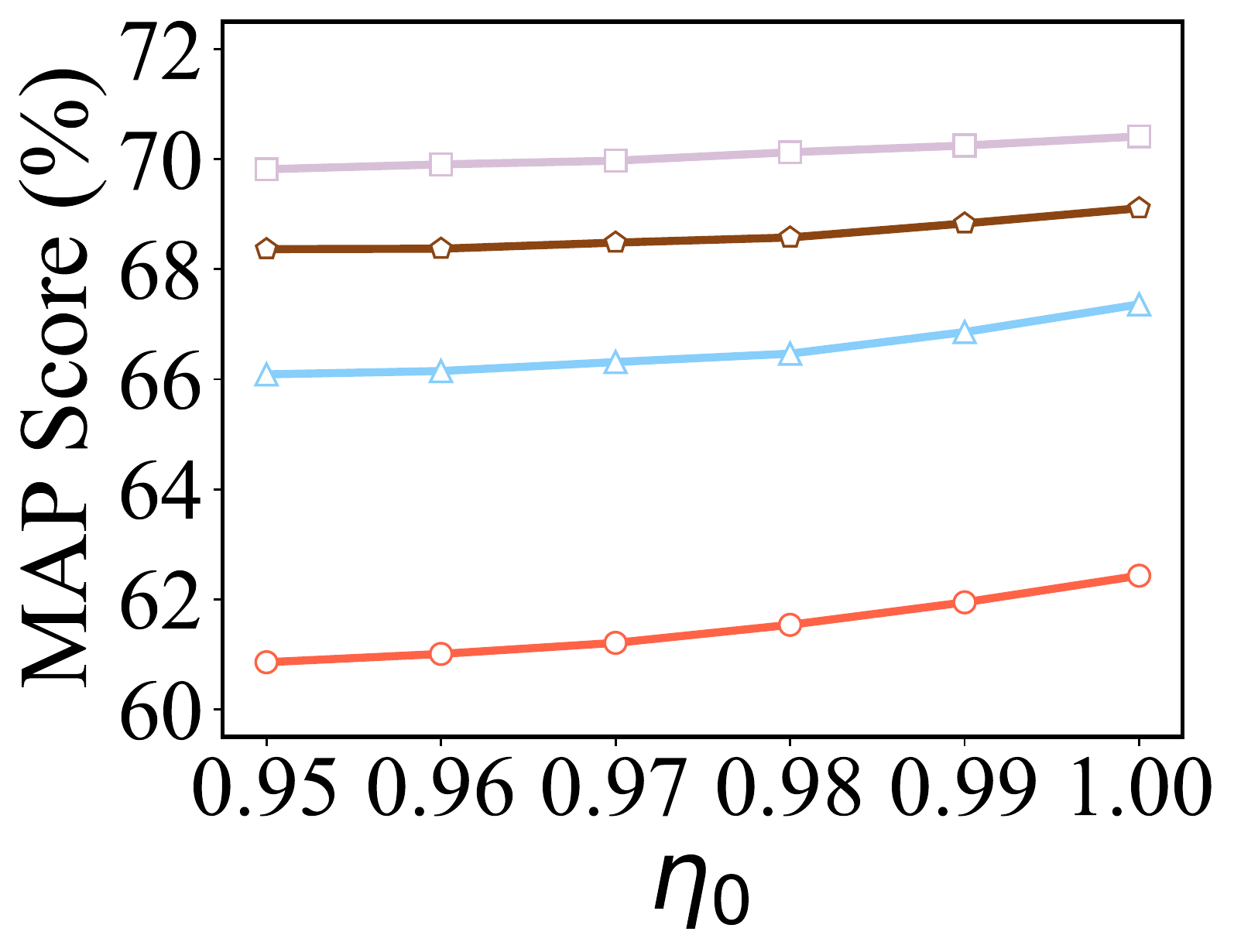}
	}
	\caption{Performance of CAP on VOC and COCO in terms of mAP (\%) with the increase of $\eta_1$ and $\eta_0$.}
	\label{fig:para}
\end{figure*}

\subsection{Study on the Influence of Reliable Interval}

\label{sec:reliable}

As mentioned above, to improve the performance, instead of using all pseudo-labels, we can train the model with only reliable pseudo-labels within the reliable intervals that are controlled by $\eta_1$, $\eta_0$. Figure \ref{fig:para} illustrates the performance of CAP as $\eta_1$ and $\eta_0$ change in the ranges of $[0.8,0.85,0.9,0.95,1]$ and $[0.95,0.96,0.97,0.98,0.99,1]$. From the figures, it can be observed that discarding the unreliable positive pseudo-labels would improve the performance, but discarding the unreliable negative pseudo-labels would degrade the performance. One possible reason behinds the phenomenon is due to the significant positive-negative imbalance in MLL data, \textit{i.e.}, the number of negative labels is often much greater than that of positive labels. This leads the model to be sensitive to false positive labels, while be robust to false negative labels. In our main experiments (Table \ref{tb:voc_coco} and Table \ref{tb:nus}), we set $\eta_1=1, \eta_0=1$. In practice, we are expected to achieve better performance by tuning the parameter $\eta_1$.

\section{Conclusion}

The paper studies the problem of semi-supervised multi-label learning, which aims to train a multi-label classifier by leveraging the information of unlabeled data. Different from the conventional instance-aware pseudo-labeling methods, we propose to assign pseudo-labels to unlabeled instances in a class-aware manner, with the aim of capturing the true class distribution of unlabeled data. Towards this goal, we propose the CAT strategy to obtain an estimated class distribution, which has been proven to be a desirable estimation of the true class distribution based on our observations. Theoretically, we first perform an analysis on the correctness of estimated class distribution; then, we provide the generalization error bound for CAP and show its dependence to the pseudo-labeling performance. Extensive experimental results on multiple benchmark datasets validate that CAP can achieve state-of-the-art performance. In the future, we plan to boost the performance of SSMLL by improving the quality of model predictions.

\bibliographystyle{plain}
\bibliography{ref}

\clearpage
\appendix

\section{Proof of Theorem 1}

\begin{thm}
	Assume the estimated class proportion $\hat\gamma_k=\frac{1}{n}\sum_{i=1}^n\mathbb{I}(y_{ik}=1)$, and the true class proportion $\gamma_k^*=\frac{1}{m}\sum_{j=1}^m\mathbb{I}(y_{jk}=1)$ for any $k\in[q]$, where $n$ and $m$ are the numbers of labeled and unlabeled examples that satisfy $m>>n$. Then, with the probability larger than $1-2n^{-1}-2m^{-1}$, we have, $\forall k\in[q], |\hat\gamma_k-\gamma_k^*|\leq\frac{\sqrt{\log n}}{\sqrt{2n}}+\frac{\sqrt{\log m}}{\sqrt{2m}}$.
\end{thm}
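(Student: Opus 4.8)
The plan is to introduce the underlying population proportion $\gamma_k = \mathbb{P}(y_k = 1)$ as a bridge between the two empirical quantities, and then control each empirical proportion's deviation from $\gamma_k$ separately by a concentration inequality. The crucial structural observation is that both $\hat\gamma_k = \frac{1}{n}\sum_{i=1}^n \mathbb{I}(y_{ik}=1)$ and $\gamma_k^* = \frac{1}{m}\sum_{j=1}^m \mathbb{I}(y_{jk}=1)$ are sample averages of i.i.d. Bernoulli random variables taking values in $[0,1]$, and, assuming the labeled and unlabeled examples are drawn from the same distribution, both have the common mean $\gamma_k$. This reduces the problem to a standard two-sided concentration argument anchored at $\gamma_k$; comparing the two sample means directly is awkward precisely because they need a shared anchor.

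First I would apply Hoeffding's inequality to each average separately. For the labeled proportion, $\mathbb{P}(|\hat\gamma_k - \gamma_k| \geq t) \leq 2\exp(-2nt^2)$; choosing $t = \sqrt{\log n}/\sqrt{2n}$ makes the exponent equal to $-\log n$, so the failure probability is exactly $2/n$. Symmetrically, for the unlabeled proportion, choosing $t = \sqrt{\log m}/\sqrt{2m}$ yields failure probability $2/m$. These two deviation radii are precisely the two summands appearing in the claimed bound, which is what motivates the particular choices of $t$.

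Next I would combine the two events. On the complement of both failure events, which by a union bound has probability at least $1 - 2n^{-1} - 2m^{-1}$, the triangle inequality gives
\begin{equation*}
|\hat\gamma_k - \gamma_k^*| \leq |\hat\gamma_k - \gamma_k| + |\gamma_k - \gamma_k^*| \leq \frac{\sqrt{\log n}}{\sqrt{2n}} + \frac{\sqrt{\log m}}{\sqrt{2m}},
\end{equation*}
which is exactly the stated bound. Note that the $m \gg n$ hypothesis is not actually needed for the inequality itself; it only justifies the subsequent remark that the first term dominates.

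The main point requiring care is the quantifier $\forall k \in [q]$, as this is a routine concentration argument without a genuine technical obstacle. The argument above delivers the bound for a single fixed class $k$ at the stated failure probability. A truly uniform-over-$k$ statement would require a union bound across all $q$ classes, inflating the failure probability to $1 - 2qn^{-1} - 2qm^{-1}$; since the theorem reports the per-class probability, I would read the statement as holding classwise and flag that the uniform version costs an extra factor of $q$. The only modeling assumption I would make explicit is that the labeled and unlabeled instances share the same label distribution, so that both empirical proportions indeed estimate the same $\gamma_k$.
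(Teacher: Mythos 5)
Your proposal is correct and follows essentially the same route as the paper's own proof: the paper likewise introduces the population proportion $\bar\gamma_k = p(y_k=1)$ as the shared anchor, applies Hoeffding's inequality to each empirical proportion with exactly your choices of deviation radii $\sqrt{\log n}/\sqrt{2n}$ and $\sqrt{\log m}/\sqrt{2m}$, and concludes via the triangle inequality and a union bound over the two failure events. Your side remarks are also accurate observations about the paper's statement itself: the paper's proof is likewise per-class (it does not pay the factor of $q$ that a uniform-over-$k$ claim would require), and it nowhere uses the hypothesis $m \gg n$.
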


\begin{proof}
	The proof is mainly based on Hoeffding's inequality that can be defined as follows.
	\begin{lemma}
		(Hoeffding's inequality). Let $z_1,...,z_N$ be independent random variables bounded by $[a_i,b_i]$. Then $\hat{z}=\frac{1}{N}\sum_{i=1}^N z_i$ obeys for any $\nu>0$
		\begin{equation*}
		\Pr(|\hat{z}-\mathbb{E}[\hat{z}]\geq \nu|)\leq 2\exp(-\frac{2N^2\nu^2}{\sum_{i=1}^{N}(b_i-a_i)^2}).
		\end{equation*}
	\end{lemma}
	Let $\bar\gamma_k=p(y_k=1)$ represents the expected class proportion. According to Hoeffding's inequality, for any $k\in[q]$, we have 
	\begin{equation*}
	\Pr(|\hat\gamma_k-\bar\gamma_k|\leq\frac{\sqrt{\log n}}{\sqrt{2n}})\geq 1-2n^{-1}
	\end{equation*}
	or equivalently, with the probability at least $1-2n^{-1}$, we have $|\hat\gamma_k-\bar\gamma_k|\leq\sqrt{\log n}/\sqrt{2n}$.
	Similarly, with the probability at least $1-2m^{-1}$, for any $k\in[q]$, we have $|\gamma_k^*-\bar\gamma_k|\leq\sqrt{\log m}/\sqrt{2m}$.
	By applying the triangle inequality, with the probability at least $1-2n^{-1}-2m^{-1}$, for any $k\in[q]$, we have 
	\begin{equation*}
	|\gamma^*_k-\hat\gamma_k|\leq\frac{\sqrt{\log n}}{\sqrt{2n}}+\frac{\sqrt{\log m}}{\sqrt{2m}}.
	\end{equation*}
	which completes the proof.
\end{proof}

\section{Proof of Theorem 2}

\begin{thm}\label{thm:bound}
	Suppose that $\ell(\cdot)$ is bounded by $B$. For some $\epsilon>0$, if $\sum_{j=1}^m|\mathbb{I}(f_k(x_j)\geq\tau(\alpha_k))-\mathbb{I}(y_{jk}=1)|/m\leq\epsilon$ for any $k\in[q]$, for any $\delta>0$, with probability at least $1-\delta$, we have
	\begin{equation*}
	\begin{aligned}
	R(\hat{f})-R(f^*)\leq 2qB\epsilon+4qL_{\ell}R_{N}(\mathcal{F})+2qB\sqrt{\frac{\log\frac{2}{\delta}}{2N}}.
	\end{aligned}
	\end{equation*}
\end{thm}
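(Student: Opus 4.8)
The plan is to prove the bound through the standard excess-risk decomposition, augmented by one extra term that isolates the effect of training on pseudo-labels rather than on true labels. Write $\hat R'(f)=\frac{1}{N}\big(\sum_{i=1}^n\mathcal{L}(f(\x_i),\y_i)+\sum_{j=1}^m\mathcal{L}_u(f(\x_j),\hat\y_j)\big)$ for the objective that is actually minimized, so that $\hat f=\arg\min_{f\in\mathcal{F}}\hat R'(f)$, and let $\tilde R(f)=\frac{1}{N}\sum_{\text{all }N}\mathcal{L}(f(\x),\y)$ be the same empirical risk computed with the \emph{true} labels on all $N=n+m$ points; note that $\tilde R$ is an unbiased estimator of $R$. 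First I would bound $R(\hat f)-R(f^\star)\le 2\sup_{f\in\mathcal{F}}|R(f)-\hat R'(f)|$, using $\hat R'(\hat f)\le\hat R'(f^\star)$, and then split $|R(f)-\hat R'(f)|\le|R(f)-\tilde R(f)|+|\tilde R(f)-\hat R'(f)|$ into a uniform-deviation (generalization) term and a pseudo-labeling term.

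The pseudo-labeling term is the key step, and it is where the hypothesis on $\epsilon$ enters. Since the labeled summands coincide, $|\tilde R(f)-\hat R'(f)|=\frac{1}{N}|\sum_{j=1}^m[\mathcal{L}_u(f(\x_j),\y_j)-\mathcal{L}_u(f(\x_j),\hat\y_j)]|$. For a fixed instance and class, subtracting the two per-class losses gives $(y_{jk}-\hat y_{jk})(\ell_1(f_k(\x_j))-\ell_0(f_k(\x_j)))$, whose magnitude is at most $|y_{jk}-\hat y_{jk}|\cdot B$ because $\ell_1,\ell_0\le B$. Summing over $k$ and $j$ and invoking the assumption $\sum_{j}|\mathbb{I}(f_k(\x_j)\ge\tau(\alpha_k))-\mathbb{I}(y_{jk}=1)|\le m\epsilon$ for every $k$ yields $|\tilde R(f)-\hat R'(f)|\le\frac{m}{N}qB\epsilon\le qB\epsilon$, uniformly in $f$. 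The factor $2$ from the decomposition then produces the $2qB\epsilon$ term.

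For the uniform-deviation term I would follow the classical route: symmetrization gives $\mathbb{E}\sup_f(R(f)-\tilde R(f))\le 2\mathcal{R}_N(\mathcal{L}\circ\mathcal{F})$, and since $\mathcal{L}=\sum_{k=1}^q\ell(f_k)$ is a sum of $q$ per-class terms, sub-additivity of Rademacher complexity combined with Talagrand's contraction lemma (each $\ell$ being $L_\ell$-Lipschitz) gives $\mathcal{R}_N(\mathcal{L}\circ\mathcal{F})\le qL_\ell\mathcal{R}_N(\mathcal{F})$. Because the full loss is bounded by $qB$, changing a single training point perturbs $\tilde R$ by at most $qB/N$, so McDiarmid's inequality controls the deviation of the supremum from its mean. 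Applying this to both $\sup_f(R-\tilde R)$ and $\sup_f(\tilde R-R)$ and union-bounding at level $\delta/2$ each gives, with probability at least $1-\delta$, $\sup_f|R(f)-\tilde R(f)|\le 2qL_\ell\mathcal{R}_N(\mathcal{F})+qB\sqrt{\frac{\log(2/\delta)}{2N}}$. The outer factor $2$ then converts these into the $4qL_\ell\mathcal{R}_N(\mathcal{F})$ and $2qB\sqrt{\log(2/\delta)/(2N)}$ terms, and adding the pseudo-labeling contribution completes the bound.

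The main obstacle I anticipate is the careful bookkeeping in the pseudo-labeling step: one must keep the normalization consistent (the unlabeled average versus the combined $N$-point average) so the constant comes out as $qB\epsilon$ rather than something larger, and one must keep the argument uniform over $f\in\mathcal{F}$ even though the thresholds $\tau(\alpha_k)$, and hence the pseudo-labels $\hat y_{jk}$, depend on the predictions. The stated hypothesis sidesteps this coupling by conditioning on the event that the per-class disagreement is at most $\epsilon$, so the pseudo-labels may be treated as a fixed relabeling; the remaining work is the two routine high-probability bounds, which combine cleanly because only the generalization term is random.
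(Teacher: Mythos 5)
Your proposal is correct and follows essentially the same route as the paper's proof: the same two ingredients (a uniform deviation bound obtained via symmetrization, per-class decomposition $\mathcal{R}_N(\mathcal{L}\circ\mathcal{F})\le qL_\ell\mathcal{R}_N(\mathcal{F})$ by contraction, and McDiarmid, plus a pseudo-labeling perturbation bound $|\widehat{R}'_u(f)-\widehat{R}_u(f)|\le qB\epsilon$ from the per-class disagreement hypothesis), combined through the standard ERM chain that doubles each term to give $2qB\epsilon+4qL_\ell\mathcal{R}_N(\mathcal{F})+2qB\sqrt{\log(2/\delta)/(2N)}$. The only differences are cosmetic --- you fold the chain into $R(\hat f)-R(f^\star)\le 2\sup_{f\in\mathcal{F}}|R(f)-\hat R'(f)|$ and use a single $1/N$ normalization (which is in fact more internally consistent than the paper's sum $\widehat{R}_l+\widehat{R}_u$ of two separately normalized averages), and your algebraic identity $(y_{jk}-\hat y_{jk})(\ell_1-\ell_0)$ is a tidier derivation of the same $qB\epsilon$ bound than the paper's $\epsilon_0,\epsilon_1$ case split.
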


\begin{proof}
	Before proving the theorem, we first provide two useful lemmas as follows.
	
	We primarily derive the uniform deviation bound between $R(f)$ and $\widehat{R}(f)$, which is a simple extension of the result in the binary setting \cite{mohri2018foundations}.
	\begin{lemma}
		Suppose that the loss function $\ell$ is $L_{\ell}$-Lipschitz continuous \textit{w.r.t.} $\theta$. For any $\delta>0$, with probability at least $1-\delta$, we have
		\begin{equation}
		|R(f)-\widehat R(f)|\leq 2qL_{\ell}\mathcal R_{n+m}(\mathcal F)+qB\sqrt{\frac{\log\frac{2}{\delta}}{2(n+m)}}\tag{1}
		\end{equation}
	\end{lemma}
	\begin{proof}
		In order to prove this lemma, we define the Rademacher complexity of $\mathcal{L}$ and $\mathcal{F}$ with $m+n$ training examples as follows:
		\begin{equation*}
		\begin{aligned}
		&\quad\mathcal{R}_{n+m}(\mathcal{L}\circ\mathcal{F})\\
		&=\mathbb{E}_{\x,\y,\bm{\sigma}}\left[\sup_{f\in\mathcal{F}}\sum_{i=1}^n\sigma_i\mathcal{L}(f(\x_i),\y_i)+\sum_{j=1}^m\sigma_j\mathcal{L}(f(\x_j),\y_j)\right]
		\end{aligned}
		\end{equation*} 
		Considering that $\mathcal{L}(f(\x),\y)=\sum_{k=1}^q\ell(f_k,y_k)$, we have
		\begin{equation}
		\begin{aligned}
		\mathcal{R}_{n+m}(\mathcal{L}\circ\mathcal{F})&\leq q\mathcal{R}_{n+m}(\ell\circ\mathcal{F})\\
		&\leq qL_{\ell}\mathcal{R}_{n+m}(\mathcal{F})
		\end{aligned}
		\end{equation} 
		where the second line is due to the Lipschitz continuity of the loss function $\ell$.
		
		Then, we proceed the proof by showing that the one direction $\sup_{f\in\mathcal{F}}R(f)-\widehat{R}(f)$ is bounded with probability at least $1-\delta/2$, and the other direction can be proved similarly. Note that replacing an example $(\x_j,\y_j)$ with another $(\x'_j,\y'_j)$ leads to a change of $\sup_{f\in\mathcal{F}}R(f)-\widehat{R}(f)$ at most $\frac{qB}{n+m}$ due to the fact that $\ell$ is bounded by $B$. According to \textit{McDiarmid's inequality} \cite{mohri2018foundations}, for any $\delta>0$, with probability at least $1-\delta/2$, we have
		\begin{equation}
		\sup_{f\in\mathcal{F}}R(f)-\widehat{R}(f)\leq\mathbb{E}\left[\sup_{f\in\mathcal{F}}R(f)-\widehat{R}(f)\right]+qB\sqrt{\frac{\log\frac{2}{\delta}}{2(n+m)}}
		\end{equation} 
		According to the result in \cite{mohri2018foundations} (Theorem 3.3) that shows $\mathbb{E}[\sup_{f\in\mathcal{F}}R(f)-\widehat{R}(f)]\leq2\mathcal{R}_m(\mathcal{F})$, by further considering the other direction $\sup_{f\in\mathcal{F}}\widehat{R}(f)-R(f)$, with probability at least $1-\delta$, we have
		\begin{equation}
		\sup_{f\in\mathcal{F}}\left|R(f)-\widehat{R}(f)\right|\leq 2qL_{\ell}\mathcal{R}_m(\mathcal{F})+qB\sqrt{\frac{\log\frac{2}{\delta}}{2n+m}}
		\end{equation}
		which completes the proof.	
	\end{proof}
	
	Then, we can bound the difference between $\widehat{R}(f)$ and $\widehat{R}'(f)$ as follows\\
	\begin{lemma}
		Suppose that $\ell(\cdot)$ is bounded by $B$. For some $\epsilon>0$, if $\sum_{j=1}^m|\mathbb{I}(f_k(x_j)\geq\tau(\alpha_k))-\mathbb{I}(y_{jk}=1)|/m\leq\epsilon$ for any $k\in[q]$, for any $f\in\mathcal{F}$, we have:
		\begin{equation}
		\left|\widehat R_u^{'}(f)-\widehat R_u(f)\right|\leq qB\epsilon
		\end{equation}
	\end{lemma}
	\begin{proof}
		Without loss of generality, assume that $\epsilon$ is the largest pseudo-labeling error among $q$ classes, \textit{i.e.}, $\epsilon=\max_{k\in[q]}\sum_{j=1}^m|\mathbb{I}(f_k(x_j)\geq\tau(\alpha_k))-\mathbb{I}(y_{jk}=1)|/m$. Obviously, $\epsilon$ consists of exactly two types of pseudo-labeling error:
		\begin{equation}\label{eq:error}
		\begin{aligned}
		\epsilon_1&=\frac{\sum_{j=1}^m\mathbb{I}(f_k(\x_j)<\tau(\alpha_k),y_{jk}=1)}{m}\\
		\epsilon_0&=\frac{\sum_{j=1}^m\mathbb{I}(f_k(\x_j)\geq\tau(\alpha_k),y_{jk}=0)}{m}
		\end{aligned}
		\end{equation}
		where $\epsilon_1$ calculates the proportion of positive labels being treated as negative ones, and $\epsilon_0$ calculates the proportion of negative labels being treated as positive ones. Then, we prove the following two sides, which provide the bounds for $\widehat{R}'_u(f)$. Firstly, we prove its upper bound:
		\begin{equation*}
		\begin{aligned}
		\quad\widehat{R}'_u(f)&=\frac{1}{m}\sum_{j=1}^m\sum_{k=1}^q\mathbb{I}(f_k(\x_j)\geq\tau(\alpha_k))\ell_1(f_k(\x_j))+\mathbb{I}(f_k(\x_j)<\tau(\alpha_k))\ell_0(f_k(\x_j))\\
		&\leq\frac{1}{m}\sum_{j=1}^m\sum_{k=1}^q\mathbb{I}(y_{jk}=1)\ell_1(f_k(\x_j))+\mathbb{I}(y_{jk}=0)\ell_0(f_k(\x_j))\\
		&\qquad\qquad+\mathbb{I}(y_{jk}=0,f_k(\x_j)\geq\tau(\alpha_k))\ell_1(f_k(\x_j))+\mathbb{I}(y_{jk}=1,f_k(\x_j)<\tau(\alpha_k))\ell_0(f_k(\x_j))\\
		&\leq\frac{1}{m}\sum_{j=1}^m\mathcal{L}(f(\x_j),\y_j)+\epsilon_0\sum_{k=1}^q\ell_1(f_k(\x_j))+\epsilon_1\sum_{k=1}^q\ell_0(f_k(\x_j))\\
		&\leq\widehat{R}_u(f)+qB\epsilon
		\end{aligned}
		\end{equation*}
		where the second line holds based on Eq.\eqref{eq:error}. Then, we prove its low bound:
		\begin{equation*}
		\begin{aligned}
		\widehat{R}'_u(f)&=\frac{1}{m}\sum_{j=1}^m\sum_{k=1}^q\mathbb{I}(f_k(\x_j)\geq\tau(\alpha_k))\ell_1(f_k(\x_j))+\mathbb{I}(f_k(\x_j)<\tau(\alpha_k))\ell_0(f_k(\x_j))\\
		&\geq\frac{1}{m}\sum_{j=1}^m\sum_{k=1}^q\mathbb{I}(y_{jk}=1)\ell_1(f_k(\x_j))+\mathbb{I}(y_{jk}=0)\ell_0(f_k(\x_j))\\
		&\qquad\qquad-\mathbb{I}(y_{jk}=1,f_k(\x_j)<\tau(\alpha_k))\ell_1(f_k(\x_j))-\mathbb{I}(y_{jk}=0,f_k(\x_j)\geq\tau(\alpha_k))\ell_0(f_k(\x_j))\\
		&\geq\frac{1}{m}\sum_{j=1}^m\mathcal{L}(f(\x_j),\y_j)-\epsilon_1\sum_{k=1}^q\ell_1(f_k(\x_j))-\epsilon_0\sum_{k=1}^q\ell_0(f_k(\x_j))\\
		&\geq\widehat{R}_u(f)-qB\epsilon
		\end{aligned}
		\end{equation*}
		By combining these two sides, we can obtain the following result:
		\begin{equation}
		\left|\widehat{R}'_u(f)-\widehat{R}_u(f)\right|\leq qB\epsilon 
		\end{equation}
		which concludes the proof. 
	\end{proof}
	
	For any $\delta>0$, with probability at least $1-\delta$, we have:
	\begin{equation}
	\begin{aligned}
	&\quad R(\hat{f})\\
	&\leq\widehat R_l(\hat{f})+\widehat R_u(\hat{f})+2qL_{\ell}\mathcal R_{n+m}(\mathcal F)+qB\sqrt{\frac{\log\frac{2}{\delta}}{2N}}\\\\
	&\leq\widehat R_l(\hat{f})+\widehat R_u^{'}(\hat{f})+qB\epsilon+2qL_{\ell}\mathcal R_{N}(\mathcal F)+qB\sqrt{\frac{\log\frac{2}{\delta}}{2N}}\\\\
	&\leq\widehat R_l(f)+\widehat R_u^{'}(f)+qB\epsilon+2qL_{\ell}\mathcal R_{N}(\mathcal F)+qB\sqrt{\frac{\log\frac{2}{\delta}}{2N}}\\\\
	&\leq\widehat R_l(f)+\widehat R_u(f)+2qB\epsilon+2qL_{\ell}\mathcal R_{N}(\mathcal F)+qB\sqrt{\frac{\log\frac{2}{\delta}}{2N}}\\\\
	&\leq R(f)+2qB\epsilon+4qL_{\ell}\mathcal R_{N}(\mathcal F)+2qB\sqrt{\frac{\log\frac{2}{\delta}}{2N}}
	\end{aligned}
	\end{equation}
	where the first and fifth lines are based on Eq.(1), and second and fourth lines are due to Lemma 2. The third line is by  the definition of $\hat{f}$.
\end{proof}

\end{document}